\def\fed{\ifmmode
           $\lrcorner$
         \else
           {\unskip
            \nobreak
            \hfil
            \penalty50
            \hskip1em
            \null
            \nobreak
            \hfil
            $\lrcorner$
            \parfillskip=0pt
            \finalhyphendemerits=0
            \endgraf}
          \fi}
\newcommand{\N}{\mathbb{N}}
\mathchardef\ls="213C
\mathchardef\gr="213E
\newcommand{\acit}{\textbf{ACIT}}
\begin{document}

\title{Complexity of Equivalence and Learning for\\ Multiplicity Tree Automata}

\author{\name Ines Marusic \email ines.marusic@cs.ox.ac.uk \\
       \name James Worrell \email jbw@cs.ox.ac.uk \\
       \addr Department of Computer Science\\
       University of Oxford\\
       Parks Road, Oxford OX1 3QD, UK}


\maketitle

\begin{abstract}

  We consider the complexity of equivalence and learning for
  multiplicity tree automata, i.e., weighted tree automata over a
  field.  We first show that the equivalence problem is logspace
  equivalent to polynomial identity testing, the complexity of which
  is a longstanding open problem. Secondly, we derive lower bounds on
  the number of queries needed to learn multiplicity tree automata in
  Angluin's exact learning model, over both arbitrary and fixed fields.

  \citet*{habrardlearning} give an exact learning algorithm for
  multiplicity tree automata, in which the number of queries is
  proportional to the size of the target automaton and the size of a
  largest counterexample, represented as a tree, that is returned by the
  Teacher.  However, the smallest tree-counterexample may be
  exponential in the size of the target automaton.  Thus the above
  algorithm does not run in time polynomial in the size of the target
  automaton, and has query complexity exponential in the lower bound.

  Assuming a Teacher that returns minimal DAG representations of
  counterexamples, we give a new exact learning algorithm whose query
  complexity is quadratic in the target automaton size, almost
  matching the lower bound, and improving the best previously-known
  algorithm by an exponential factor.
\end{abstract}

\begin{keywords}
  exact learning, query complexity, multiplicity tree automata, Hankel
  matrices, DAG representations of trees
\end{keywords}

\section{Introduction}
Trees are a natural model of structured data, including syntactic
structures in natural language processing, web information extraction,
and XML data on the web. Many of those applications require
representing functions from trees into the real numbers.  A broad class of
such functions can be defined by \emph{multiplicity tree automata},
which generalise \emph{probabilistic tree automata}.

Multiplicity tree automata were introduced
by~\citet*{berstel1982recognizable} under the terminology of linear
representations of tree series. They generalise classical finite tree
automata by having transitions labelled by values in a field. They
also generalise multiplicity word automata, introduced
by~{\citet*{IC::Schutzenberger1961}}, since words are a special case
of trees.  Multiplicity tree automata define many natural structural
properties of trees and can be used to model probabilistic processes
running on trees.

Multiplicity word and tree automata have been applied to a wide
variety of learning problems, including speech recognition, image
processing, character recognition, and grammatical inference; see
the paper of~\citet{BalleM12} for references.  A variety of
methods have been employed for learning such automata, including
matrix completion and spectral methods~\citep{BalleM12,DenisGH14,GybelsDH14} and principal components analysis~\citep{BaillyDR09}.

A fundamental problem concerning multiplicity tree automata is
\emph{equivalence}: given two automata, do they define the same
function on trees.  \citet*{seidlfull} proved that equivalence of
multiplicity tree automata is decidable in polynomial time assuming
unit-cost arithmetic, and in randomised polynomial time in the usual
bit-cost model. No finer analysis of the complexity of this problem
exists to date. In contrast, the complexity of equivalence for
classical nondeterministic word and tree automata has been completely
characterised: PSPACE-complete over
words~\citep{AhoHU74} and EXPTIME-complete over
trees~\citep*{seidlfull}.

Our first contribution, in Section~\ref{section:interreducibility}, is to
show that the equivalence problem for multiplicity tree automata is
logspace equivalent to polynomial identity testing, i.e., the problem
of deciding whether a polynomial given as an arithmetic circuit is
zero. The latter is known to be solvable in randomised polynomial
time~\citep*{demillo1978probabilistic,
  Schwartz:1980:FPA:322217.322225, Zippel:sparsepoly}, whereas solving
it in deterministic polynomial time is a well-studied and longstanding
open problem~\citep[see][]{AroraBarak2}.

Equivalence is closely connected to the problem of learning in the
\emph{exact learning model} of \citet*{angluin}. In this model, a
Learner actively collects information about the target function from a
Teacher through \emph{membership queries}, which ask for the value of
the function on a specific input, and \emph{equivalence queries},
which suggest a hypothesis to which the Teacher provides a
counterexample if one exists. A class of functions $\mathscr{C}$ is
\emph{exactly learnable} if there exists an exact learning algorithm such
that for any function $f \in \mathscr{C}$, the Learner identifies $f$
using polynomially many membership and equivalence queries in the size
of a shortest representation of $f$ and the size of a largest
counterexample returned by the Teacher during the execution of the algorithm.  The exact learning model is
an important theoretical model of the learning process.  It is
well-known that learnability in the exact learning model also implies
learnability in the PAC model with membership queries
\citep*{Valiant:1985:LDC:1625135.1625242}.

One of the earliest results about the exact learning model was the
proof of \citet*{Angluin87} that deterministic finite automata are
learnable.  This result was generalised
by~\citet*{DBLP:conf/dlt/DrewesH03} to show exact learnability of
deterministic finite tree automata, generalising also a result
of~\citet*{sakakibara1990learning} on the exact learnability of
context-free grammars from structural data.

Exact learnability of multiplicity automata has also been extensively
studied. \citet{five} show that multiplicity word automata can be
learned efficiently, and apply this to learn various classes of DNF
formulae and polynomials. These results were generalised
by~\citet*{klivans2006learning} to show exact learnability of
restricted algebraic branching programs and noncommutative
set-multilinear arithmetic formulae. \citet{bisht2006optimal} give an
almost tight (up to a $\mathit{log}$ factor) lower bound for the
number of queries made by any exact learning algorithm for the class
of multiplicity word automata.  Finally,~\citet*{habrardlearning} give an algorithm for learning multiplicity tree automata in the exact
model.

Our first contribution on learning multiplicity tree automata,
in Section \ref{lower_bound}, is to give lower bounds on the number of
queries needed to learn multiplicity tree automata in the exact
learning model, both for the case of an arbitrary and a fixed
underlying field. The bound in the latter case is proportional to the
automaton size for trees of a fixed maximal branching degree. To the
best of our knowledge, these are the first lower bounds on the query
complexity of exactly learning of multiplicity tree automata.

Consider a target multiplicity tree automaton whose minimal
representation $A$ has $n$ states.  The algorithm
of~\citet*{habrardlearning} makes at most $n$ equivalence queries and
number of membership queries proportional to $|A| \cdot s$, where $|A|$ is
the size of $A$ and $s$ is the size of a largest counterexample
returned by the Teacher.  Since this algorithm assumes that the
Teacher returns counterexamples represented explicitly as trees, $s$
can be exponential in $|A|$, even for a Teacher that returns
counterexamples of minimal size (see
Example~\ref{example:DAGsuccinctness}).  This observation reveals an
exponential gap between the query complexity of the algorithm of~\citet*{habrardlearning} and our above-mentioned lower bound, which is only
linear in $|A|$.  Another consequence is that the worst-case time
complexity of this algorithm is exponential in the size of the target
automaton.

Given two inequivalent multiplicity tree automata with $n$ states in
total, the algorithm of \citet{seidlfull} produces a subtree-closed
set of trees of cardinality at most $n$ that contains a tree on which
the automata differ.  It follows that the counterexample contained in
this set has at most $n$ subtrees, and hence can be represented as a
DAG with at most $n$ vertices. Thus in the context of exact learning
it is natural to consider a Teacher that can return succinctly
represented counterexamples, i.e., trees represented as DAGs.

Tree automata that run on DAG representations of finite trees were
first introduced by~\citet{charatonik1999automata} as extensions of
ordinary tree automata, and were further studied by
~\citet{anantharaman2005closure}. The automata considered
by~\citet{charatonik1999automata} and ~\citet{anantharaman2005closure}
run on fully-compressed DAGs. \citet*{fila2006automata} extended this
definition by introducing tree automata that run on DAGs that may be
partially compressed. In this paper, we employ the latter framework in
the context of learning multiplicity automata.

In Section \ref{section:thealgorithm}, we present a new exact learning
algorithm for multiplicity tree automata that achieves the same bound
in the number of equivalence queries as the algorithm of
~\citet*{habrardlearning}, while using number of membership
queries quadratic in the target automaton size and linear in the counterexample size, even when counterexamples are
given succinctly. Assuming that the Teacher provides minimal DAG
representations of counterexamples, our algorithm therefore makes
quadratically many queries in the target size. This is exponentially
fewer queries than the best previously-known
algorithm~\citep*{habrardlearning} and within a linear factor of the
above-mentioned lower bound.  Furthermore, our algorithm performs a
quadratic number of arithmetic operations in the size of the target
automaton, and can be implemented in randomised polynomial time in the
Turing model.

Like the algorithm of~\citet*{habrardlearning}, our algorithm
constructs a Hankel matrix of the target automaton.  However on
receiving a counterexample tree $z$, the former algorithm adds a new
column to the Hankel matrix for every suffix of $z$, while our
algorithm adds (at most) one new row for each subtree of $z$.
Crucially the number of suffixes may be exponential in the size of a
DAG representation of $z$, whereas the number of subtrees is only
linear in the size of a DAG representation.

An extended abstract~\citep*{MarusicW14} of this work appeared in the
proceedings of MFCS 2014. The current paper contains full proofs of
all results reported there, the formal definition of multiplicity tree
automata running on DAGs, and a refined complexity analysis of the
learning algorithm.

\section{Preliminaries} \label{section:prelims}
Let $\mathbb{N}$ and~$\mathbb{N}_{0}$ denote the set of all positive and non-negative integers, respectively.  Let $n \in \mathbb{N}$. We write $[n]$ for the set $\{1, 2, \ldots, n\}$ and $I_{n}$ for the identity matrix of order~$n$.
For every $i \in [n]$, we write $e_i$ for the $i^\text{th}$ $n$-dimensional coordinate row vector.

For any matrix $A$, we write $A_{i}$ for its $i^\text{th}$ row, $A^{j}$ for its $j^\text{th}$ column, and $A_{i, j}$ for its $(i,j)^\text{th}$ entry. Given non-empty subsets $I$ and $J$ of the rows and columns of $A$, respectively, we write $A_{I, J}$ for the submatrix $(A_{i, j})_{i \in I, j \in J}$ of $A$. For singletons, we write simply $A_{i, J} := A_{ \{i \}, J}$ and $A_{I, j} := A_{I, \{j \}}$.
Let $n_1, \ldots, n_k \in \mathbb{N}$, and let $A$ be a matrix with $n_1 \cdot \ldots \cdot n_k$ rows. For every $(i_1, \ldots, i_{k}) \in  [n_1] \times\cdots \times [n_k]$, we write $A_{(i_1, \ldots, i_{k})}$ for the $(\sum_{l=1}^{k-1} (i_l -1 ) \cdot ( \prod_{ p= l+1}^{k} n_p ) + i_k)^\text{th}$ row of $A$.

Given a set $V$, we denote by $V^{*}$ the set of all finite ordered tuples of elements from $V$. For any subset $S \subseteq V$, the \emph{characteristic function} of $S$ (relative to $V$) is the function $\chi_{S}: V \to \{0, 1 \}$ such that $\chi_{S} (x) = 1$ if $x \in S$, and $\chi_{S} (x) = 0$ otherwise.

\subsection{Kronecker Product} 
Let $A$ be a matrix of dimension $m_1 \times n_1$ and $B$ a matrix of dimension $m_2 \times n_2$. The \emph{Kronecker product} of $A$ by $B$, written as $A\otimes B$, is a matrix of dimension $m_1 m_2 \times n_1 n_2$ where $(A\otimes B)_{(i_1, i_2), (j_1, j_2)} = A_{i_1 ,j_1} \cdot B_{i_2, j_2}$  for every $i_1 \in [m_1]$, $i_2 \in [m_2]$, $j_1 \in [n_1]$, $j_2 \in [n_2]$.

The Kronecker product is bilinear, associative, and has the following \emph{mixed-product property}: For any matrices $A$, $B$, $C$, $D$ such that products $A \cdot C$ and $B \cdot D$ are defined, it holds that $( A \otimes B) \cdot (C \otimes D) = (A \cdot C) \otimes (B \cdot D)$.

Let $k \in \mathbb{N}$ and $A_1, \ldots, A_k$ be matrices such that for every $l \in [k]$,  $A_l$ has $n_l$ rows. It can easily be shown using induction on $k$ that for every $(i_1, \ldots, i_k) \in [n_1]  \times \cdots \times [n_k]$, it holds that
\begin{align}
\left( A_{1} \otimes \cdots\otimes A_{k}\right)_{(i_1, \ldots, i_k)} = (A_{1})_{i_1} \otimes \cdots\otimes (A_{k})_{i_k}. \label{Kronecker_indices}
\end{align}
We write $\bigotimes_{l=1}^{k} A_{l} := A_{1} \otimes \cdots\otimes A_{k}$. 

For every $k \in \N_0$ we define the \emph{$k$-fold Kronecker power} of a matrix $A$, written as $A^{\otimes k}$, inductively by $A^{\otimes 0} = I_1$ and $A^{\otimes k} = A^{\otimes (k-1)} \otimes A$ for $k \ge 1$. 

Let $k \in \mathbb{N}_0$. For any matrices $A, B$ of appropriate dimensions, we have
\begin{align}
(A \otimes B)^{k} = A^{k} \otimes B^{k}. \label{Kronecker_mixed_kfold}
\end{align}
For any matrices $A_1, \ldots, A_k$  and $B_1, \ldots, B_k$ where product $A_l \cdot B_l$ is defined for every $l \in [k]$, we have
\begin{align}
(A_{1} \otimes \cdots\otimes A_{k}) \cdot (B_{1} \otimes \cdots\otimes B_{k}) = (A_{1} \cdot B_{1}) \otimes \cdots \otimes (A_{k} \cdot B_{k}). \label{Kronecker:kmixed}
\end{align}
Equations (\ref{Kronecker_mixed_kfold}) and (\ref{Kronecker:kmixed}) follow easily from the mixed-product property by induction on $k$.

\subsection{Finite Trees} A \emph{ranked alphabet} is a tuple  $(\Sigma, \mathit{rk})$ where $\Sigma$ is a nonempty finite set of symbols and $\mathit{rk}: \Sigma  \to \N_{0}$ is a function. Ranked alphabet $(\Sigma, \mathit{rk})$ is often written $\Sigma$ for short. For every $k \in \N_{0}$, we define the set of all \emph{$k$-ary} symbols $\Sigma_{k} := \mathit{rk}^{-1}(\{k\})$. If $\sigma \in \Sigma_{k}$ then we say that $\sigma$ has \emph{rank} (or \emph{arity}) $k$. We say that $\Sigma$ has \emph{rank} $m$ if $m = \mathit{max}\{\mathit{rk}(\sigma) : \sigma \in \Sigma \}$. 

The set of \emph{$\Sigma$-trees} (\emph{trees} for short), written as $T_{\Sigma}$, is the smallest set $T$ satisfying the following two conditions: (i) $\Sigma_{0} \subseteq T$; and  (ii) if $k \ge 1$, $\sigma \in \Sigma_{k}$, $t_{1}, \ldots, t_{k} \in T$ then $\sigma (t_{1}, \ldots, t_{k} ) \in T$. Given a $\Sigma$-tree $t$, a \emph{subtree} of $t$ is a $\Sigma$-tree consisting of a node in $t$ and all of its descendants in $t$. The set of all subtrees of $t$ is denoted by $\mathit{Sub}(t)$.

Let $\Sigma$ be a ranked alphabet and $\mathbb{F}$ be a field. A \emph{tree series} over $\Sigma$ with coefficients in $\mathbb{F}$ is a function $f: T_{\Sigma}  \to \mathbb{F}$. For every $t \in T_{\Sigma}$, we call $f (t)$ the \emph{coefficient} of $t$ in $f$. The set of all tree series  over $\Sigma$ with coefficients in $\mathbb{F}$  is denoted by $\mathbb{F}\langle\langle T_{\Sigma} \rangle\rangle$.

We define the tree series $\mathit{height}, \mathit{size}, \#_{\sigma} \in \mathbb{Q}\langle\langle
T_{\Sigma} \rangle\rangle$ where $\sigma \in
\Sigma$, as follows: (i) if $t \in \Sigma_{0}$ then $\mathit{height}(t) = 0$,
$\mathit{size}(t) = 1$, $\#_{\sigma} (t) = \chi_{\{ t =
  \sigma \}}$; and (ii) if $t = a (t_{1}, \ldots, t_{k} )$ where $k
\ge 1$, $a \in \Sigma_{k}$, $t_{1}, \ldots, t_{k} \in T_{\Sigma}$ then $\mathit{height}(t) = 1+ \mathit{max}_{i \in [k]}
\mathit{height}(t_{i})$, $\mathit{size}(t) = 1+ \sum_{i \in [k]}
\mathit{size}(t_{i})$, $\#_{\sigma}(t) = \chi_{\{ a =
  \sigma \}}+ \sum_{i \in [k]} \#_{\sigma}(t_{i})$, respectively.
For every $n \in \N_{0}$, we define the sets $T_{\Sigma}^{< n} := \{t \in T_{\Sigma} : \mathit{height}(t) < n\}$, $T_{\Sigma}^{n} := \{t \in T_{\Sigma} : \mathit{height}(t) = n\}$, and $T_{\Sigma}^{\le  n} := T_{\Sigma}^{< n} \cup T_{\Sigma}^{n}$.

Let $\Box$ be a nullary symbol not contained in $\Sigma$. The set
$C_{\Sigma}$ of \emph{$\Sigma$-contexts} (\emph{contexts} for short)
is the set of $(\{\Box\} \cup \Sigma)$-trees in which $\Box$ occurs
exactly once.  The \emph{concatenation} of $c \in C_{\Sigma}$ and $t \in T_{\Sigma} \mathop{\dot{\cup}} C_{\Sigma}$, written as $c [t]$, is the tree
obtained by substituting $t$ for $\Box$ in $c$. A \emph{suffix} of a
$\Sigma$-tree $t$ is a $\Sigma$-context $c$ such that $t = c
[t^{\prime}]$ for some $\Sigma$-tree $t^{\prime}$.  The
\emph{Hankel matrix} of a tree series $f \in \mathbb{F}\langle\langle
T_{\Sigma} \rangle\rangle$ is the matrix $H: T_{\Sigma} \times
C_{\Sigma} \to \mathbb{F}$ such that $H_{t, c} = f(c [t])$ for every
$t \in T_{\Sigma} $ and $c \in C_{\Sigma}$.

\subsection{Multiplicity Tree Automata} 
Let $\mathbb{F}$ be a field. An $\mathbb{F}$-\emph{multiplicity tree automaton} ($\mathbb{F}$-\emph{MTA}) is a quadruple $A = (n, \Sigma, \mu, \gamma)$ which consists of the \emph{dimension} $n \in \N_{0}$ representing the number of states, a ranked alphabet $\Sigma$, the \emph{tree representation} $\mu = \{\mu(\sigma) : \sigma \in \Sigma\}$ where for every symbol $\sigma \in \Sigma$, matrix $\mu(\sigma)  \in \mathbb{F}^{ n^{\mathit{rk}(\sigma)} \times n }$ represents the \emph{transition matrix} associated to $\sigma$, and
the \emph{final weight vector} $\gamma \in \mathbb{F}^{n \times 1}$. The \emph{size} of the automaton $A$, written as $|A|$, is defined as 
\begin{align*}
|A| := \sum_{\sigma \in \Sigma}
n^{\mathit{rk}(\sigma) + 1} + n.
\end{align*}
That is, the size of $A$ is the total
number of entries in all transition matrices and the final weight
vector.\footnote{We measure size assuming explicit rather than sparse representations of the transition matrices and final weight vector because minimal automata are only unique up to change of basis (see Theorem \ref{thm:minimal_unique}).}

We extend
the tree representation $\mu$ from $\Sigma$ to $T_{\Sigma}$ by defining
\begin{align*}
\mu(\sigma(t_{1}, \ldots, t_{k})) :=
(\mu(t_{1}) \otimes \cdots \otimes \mu(t_{k}) ) \cdot \mu(\sigma)
\end{align*} 
for every $\sigma \in \Sigma_{k}$ and $t_{1}, \ldots,t_{k} \in
T_{\Sigma}$. The tree series $\| A \|\in \mathbb{F}\langle\langle
T_{\Sigma} \rangle\rangle$ \emph{recognised} by $A$ is defined by ${\|
  A \| (t) = \mu(t) \cdot \gamma}$ for every $t \in
T_{\Sigma}$.  Note that a $0$-dimensional automaton necessarily recognises a zero tree series.
Two automata $A_1$, $A_2$ are said to be \emph{equivalent} if $\| A_1 \| \equiv \| A_2 \|$.

We further extend $\mu$ from $T_{\Sigma}$ to $C_{\Sigma}$ by treating $\Box$ as a unary symbol and defining $\mu (\Box) := I_{n}$. This allows to define $\mu(c) \in \mathbb{F}^{n \times n}$ for every $c = \sigma (t_{1}, \ldots, t_{k}) \in C_{\Sigma}$ inductively as $\mu(c)  :=  \left(\mu(t_{1}) \otimes \cdots \otimes \mu(t_{k}) \right) \cdot \mu(\sigma)$. It is easy to see that $\mu(c [t]) = \mu(t) \cdot \mu(c)$ for every $t \in T_{\Sigma}$ and $c \in C_{\Sigma}$.

Let $A_{1} = (n_{1},\Sigma, \mu_{1}, \gamma_{1})$ and $A_{2} =
(n_{2},\Sigma, \mu_{2}, \gamma_{2})$ be two $\mathbb{F}$-multiplicity tree automata. 
The \emph{product} of $A_{1}$ by $A_{2}$, written as
$A_{1} \times A_{2}$, is the $\mathbb{F}$-multiplicity tree automaton $(n, \Sigma, \mu, \gamma)$ where:
\begin{itemize}
\item[$\bullet$] $n = n_{1} \cdot n_{2}$;
\item[$\bullet$] If $\sigma \in \Sigma_{k}$ then $\mu (\sigma) = P_{k} \cdot (\mu_{1}(\sigma)\otimes \mu_{2}(\sigma))$ where $P_{k}$ is a permutation matrix of order $(n_{1} \cdot n_{2})^{k}$ uniquely defined (see Remark \ref{rmk:productMTA-defn} below) by
\begin{align} \label{MTACartesian_transition}
(u_{1} \otimes \cdots \otimes u_{k}) \otimes (v_{1} \otimes \cdots \otimes v_{k}) = ((u_{1} \otimes v_{1}) \otimes \cdots \otimes (u_{k} \otimes v_{k})) \cdot P_{k}
\end{align}
for all $u_{1}, \ldots, u_{k} \in \mathbb{F}^{1 \times n_{1}}$ and $v_{1}, \ldots, v_{k} \in \mathbb{F}^{1 \times n_{2}}$;
\item[$\bullet$] $\gamma = \gamma_{1} \otimes \gamma_{2}$.
\end{itemize}

\begin{remark} \label{rmk:productMTA-defn}
We argue that for every rank $k$ of a symbol in $\Sigma$, matrix $P_{k}$ is well-defined by Equation (\ref{MTACartesian_transition}). In order to do this, it suffices to show that $P_{k}$ is well-defined on a set of basis vectors of $\mathbb{F}^{1 \times n_{1}}$ and $\mathbb{F}^{1 \times n_{2}}$ and then extend linearly. To that end, let $(e_{i}^{1})_{i \in [n_{1}]}$ and $(e_{j}^{2})_{j \in [n_{2}]}$ be bases of $\mathbb{F}^{1 \times n_{1}}$ and $\mathbb{F}^{1 \times n_{2}}$, respectively. Let us define sets of vectors
\begin{align*}
E_{1} := \{(e_{i_1}^{1} \otimes \cdots \otimes e_{i_k}^{1}) \otimes (e_{j_1}^{2} \otimes \cdots \otimes e_{j_k}^{2}) : i_1, \ldots, i_k \in [n_{1}], j_1, \ldots, j_k \in [n_{2}]\}
\end{align*}
and
\begin{align*}
E_{2} := \{(e_{i_1}^{1} \otimes e_{j_1}^{2}) \otimes \cdots \otimes (e_{i_k}^{1} \otimes e_{j_k}^{2}) : i_1, \ldots, i_k \in [n_{1}], j_1, \ldots, j_k \in [n_{2}]\}.
\end{align*}
Then, $E_{1}$ and $E_{2}$ are two bases of the vector space $\mathbb{F}^{1 \times n_{1} n_{2}}$. Therefore, $P_{k}$ is well-defined as an invertible matrix mapping basis $E_{1}$ to basis $E_{2}$.
\end{remark}

Essentially the same product construction as in the proof of the first part of the following proposition is given by \citet*[Proposition 5.1]{berstel1982recognizable} in the language of linear representations of tree series rather than multiplicity tree automata. 
\begin{proposition} \label{proposition_cartesian}
Let $A_{1}$ and $A_{2}$ be $\mathbb{Q}$-multiplicity tree automata over a ranked alphabet $\Sigma$. Then, for every $t \in T_{\Sigma}$ it holds that $\| A_{1} \times A_{2}\| (t) =  \| A_{1} \|(t) \cdot \|A_{2}\|(t)$. Furthermore, automaton $A_{1} \times A_{2}$ can be computed from $A_{1}$ and $A_{2}$ in logarithmic space.
\end{proposition}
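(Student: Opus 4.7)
The plan is to prove the first part by structural induction on $t \in T_{\Sigma}$, establishing the stronger statement that
\begin{align*}
\mu(t) = \mu_{1}(t) \otimes \mu_{2}(t),
\end{align*}
where $\mu$ is the tree representation of $A_{1} \times A_{2}$. Once this is known, the desired equality follows immediately from the mixed-product property applied to the $1 \times 1$ matrices $\mu_{i}(t) \cdot \gamma_{i}$:
\begin{align*}
\| A_{1} \times A_{2}\|(t) = (\mu_{1}(t) \otimes \mu_{2}(t)) \cdot (\gamma_{1} \otimes \gamma_{2}) = (\mu_{1}(t) \cdot \gamma_{1}) \otimes (\mu_{2}(t) \cdot \gamma_{2}) = \|A_{1}\|(t) \cdot \|A_{2}\|(t).
\end{align*}

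For the base case, if $\sigma \in \Sigma_{0}$ then $P_{0}$ is forced to be the $1 \times 1$ identity matrix (empty Kronecker products equal $I_{1}$, and Equation~(\ref{MTACartesian_transition}) then reduces to $1 = 1 \cdot P_{0}$), so $\mu(\sigma) = \mu_{1}(\sigma) \otimes \mu_{2}(\sigma)$ as required. For the inductive step, let $t = \sigma(t_{1}, \ldots, t_{k})$. By the induction hypothesis and the definition of $\mu(\sigma)$,
\begin{align*}
\mu(t) = \bigl((\mu_{1}(t_{1}) \otimes \mu_{2}(t_{1})) \otimes \cdots \otimes (\mu_{1}(t_{k}) \otimes \mu_{2}(t_{k}))\bigr) \cdot P_{k} \cdot (\mu_{1}(\sigma) \otimes \mu_{2}(\sigma)).
\end{align*}
Applying Equation~(\ref{MTACartesian_transition}) with $u_{i} = \mu_{1}(t_{i})$ and $v_{i} = \mu_{2}(t_{i})$ collapses the first factor and $P_{k}$ into $(\mu_{1}(t_{1}) \otimes \cdots \otimes \mu_{1}(t_{k})) \otimes (\mu_{2}(t_{1}) \otimes \cdots \otimes \mu_{2}(t_{k}))$. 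Multiplying by $\mu_{1}(\sigma) \otimes \mu_{2}(\sigma)$ and invoking Equation~(\ref{Kronecker:kmixed}) then yields $\mu_{1}(t) \otimes \mu_{2}(t)$, completing the induction.

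The trickiest aspect here is simply bookkeeping: the inductive hypothesis groups factors as $(\mu_{1}(t_{i}) \otimes \mu_{2}(t_{i}))$, whereas the mixed-product property requires them grouped as $(\mu_{1}(t_{1}) \otimes \cdots) \otimes (\mu_{2}(t_{1}) \otimes \cdots)$. The permutation matrix $P_{k}$ was defined precisely to perform this regrouping, so the step is conceptually transparent once the notation is handled carefully.

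For the logspace claim, observe that $A_{1} \times A_{2}$ is determined by $\gamma_{1} \otimes \gamma_{2}$ and, for each $\sigma \in \Sigma$, the matrix $P_{k} \cdot (\mu_{1}(\sigma) \otimes \mu_{2}(\sigma))$. Each entry of a Kronecker product is the product of two entries from the input matrices, whose positions are computed from the output index by integer division and modular arithmetic on numbers of bit-length $O(\log |A_{1}| + \log |A_{2}|)$; multiplication by $P_{k}$ only permutes row indices according to the explicit formula $(i_{1}, j_{1}, \ldots, i_{k}, j_{k}) \mapsto (i_{1}, \ldots, i_{k}, j_{1}, \ldots, j_{k})$ implicit in Equation~(\ref{MTACartesian_transition}), and this permutation is computable in logspace. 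Consequently each output entry can be produced by a logspace transducer scanning the input a bounded number of times, giving the claimed bound.
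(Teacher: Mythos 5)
Your proposal is correct and follows essentially the same route as the paper: an induction establishing $\mu(t)=\mu_{1}(t)\otimes\mu_{2}(t)$ via Equation~(\ref{MTACartesian_transition}) and the mixed-product property, followed by the same computation with $\gamma_{1}\otimes\gamma_{2}$. Your treatment of the logspace claim is somewhat more detailed than the paper's (which merely notes that a constant number of pointers suffices), but the argument is the same in substance.
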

\begin{proof}
Let $A_{1} = (n_{1},\Sigma, \mu_{1}, \gamma_{1})$, $A_{2} = (n_{2},\Sigma, \mu_{2}, \gamma_{2})$, and $A_{1} \times A_{2} = (n,\Sigma, \mu, \gamma)$.
First we show that for any $t \in T_{\Sigma}$,
\begin{align}
\mu(t) = \mu_{1}(t) \otimes \mu_{2}(t). \label{cartesian_product_claim}
\end{align}
We prove that Equation (\ref{cartesian_product_claim}) holds for all $t \in T_{\Sigma}$ using induction on $height(t)$.
The base case $t = \sigma \in \Sigma_{0}$ holds immediately by definition since $P_{0} = I_{1}$.
For the induction step, let $h \in \N_{0}$ and assume that Equation (\ref{cartesian_product_claim}) holds for every $t \in T_{\Sigma}^{\le  h}$. Take any $t \in T_{\Sigma}^{h+1}$. Then $t = \sigma(t_{1}, \ldots, t_{k})$ for some $k \ge 1$, $\sigma \in \Sigma_{k}$, and $t_1, \ldots, t_k \in T_{\Sigma}^{\le  h}$. By induction hypothesis, Equation (\ref{MTACartesian_transition}), and the mixed-product property of Kronecker product we now have 
\allowdisplaybreaks \begin{align*} 
\mu(t) & = \left( \mu(t_1) \otimes \cdots \otimes \mu(t_k) \right) \cdot \mu(\sigma)\\
& = \left( \left(\mu_{1}(t_{1}) \otimes \mu_{2}(t_{1})\right) \otimes \cdots \otimes \left(\mu_{1}(t_{k}) \otimes \mu_{2}(t_{k})\right) \right) \cdot P_{k} \cdot (\mu_{1}(\sigma)\otimes \mu_{2}(\sigma))\\
& = \left(\left(\mu_{1}(t_{1}) \otimes \cdots \otimes \mu_{1}(t_{k})\right) \otimes \left(\mu_{2}(t_{1}) \otimes \cdots \otimes \mu_{2}(t_{k})\right)\right) \cdot (\mu_{1}(\sigma)\otimes \mu_{2}(\sigma))\\
& = \left(\left(\mu_{1}(t_{1}) \otimes \cdots \otimes \mu_{1}(t_{k})\right) \cdot \mu_{1}(\sigma)\right) \otimes \left(\left(\mu_{2}(t_{1}) \otimes \cdots \otimes \mu_{2}(t_{k})\right) \cdot \mu_{2}(\sigma)\right)\\
& = \mu_{1}(t) \otimes \mu_{2}(t).
\end{align*}
This completes the proof of Equation (\ref{cartesian_product_claim}) for all $t \in T_{\Sigma}$ by induction.
For every $t \in T_{\Sigma}$, we now have 
\allowdisplaybreaks\begin{align*}
\|A_{1} \times A_{2}\|(t) 
& = \mu (t) \cdot \gamma = (\mu_{1}(t) \otimes \mu_{2}(t)) \cdot (\gamma_{1} \otimes \gamma_{2})\\ 
&  = (\mu_{1}(t) \cdot \gamma_{1}) \otimes (\mu_{2}(t) \cdot \gamma_{2}) = \|A_{1}\|(t) \otimes \|A_{2}\|(t) =  \|A_{1}\|(t) \cdot \|A_{2}\|(t).
\end{align*}

We conclude by noting that automaton $A_{1} \times A_{2}$ can be
computed from $A_{1}$ and $A_{2}$ by an algorithm that maintains a
constant number of pointers, therefore requiring only logarithmic
space.
\end{proof}

A tree series $f$ is called \emph{recognisable} if it is recognised by some multiplicity tree automaton; such an automaton is called an \emph{MTA-representation} of $f$. An MTA-representation of $f$ that has the smallest dimension is called \emph{minimal}. The set of all recognisable tree series in $\mathbb{F}\langle\langle T_{\Sigma} \rangle\rangle$ is denoted by $\text{Rec}(\Sigma, \mathbb{F})$.

The following result was first shown by \citet*{bozapalidis1983rank}; an essentially equivalent result was later shown by \citet*{habrardlearning}.

\begin{theorem} [\citealp*{bozapalidis1983rank}]  \label{thm:Hankel}
Let $\Sigma$ be a ranked alphabet and $\mathbb{F}$ be a field. Let $f \in \mathbb{F}\langle\langle T_{\Sigma} \rangle\rangle$ and let $H$ be the Hankel matrix of $f$.  It holds that $f \in \text{Rec}(\Sigma, \mathbb{F})$ if and only if $H$ has finite rank over $\mathbb{F}$. In case $f\in \text{Rec}(\Sigma, \mathbb{F})$, the dimension of a minimal MTA-representation of $f$ is $\mathit{rank}(H)$ over $\mathbb{F}$. 
\end{theorem}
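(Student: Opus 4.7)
The plan is to establish both implications via a standard Myhill--Nerode-style correspondence between MTA states and rows of the Hankel matrix.

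For the easy direction, suppose $f$ is recognised by an MTA $A = (n, \Sigma, \mu, \gamma)$. Using the identity $\mu(c[t]) = \mu(t)\cdot\mu(c)$ stated in the preliminaries, I would write $H_{t,c} = f(c[t]) = \mu(t)\cdot\mu(c)\cdot\gamma$, and observe that this factors $H$ through $\mathbb{F}^{1\times n}$: the rows of $H$ all lie in the image of the map $t \mapsto \mu(t)$, which is contained in an $n$-dimensional subspace. Hence $\mathit{rank}(H) \le n$, which in particular gives $\mathit{rank}(H) \le \dim(A)$ for every MTA-representation $A$ of $f$, and so $\mathit{rank}(H)$ is finite.

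For the converse, assume $\mathit{rank}(H) = n < \infty$. Pick trees $t_1,\ldots,t_n \in T_\Sigma$ whose corresponding rows $H_{t_1},\ldots,H_{t_n}$ form a basis of the row space of $H$. For any $t \in T_\Sigma$ there are unique scalars $\alpha_i(t) \in \mathbb{F}$ with $H_t = \sum_{i=1}^{n}\alpha_i(t)\,H_{t_i}$, and I set $\mu(t) := (\alpha_1(t),\ldots,\alpha_n(t)) \in \mathbb{F}^{1\times n}$ and $\gamma_i := f(t_i)$. Evaluating at the trivial context $\Box$ gives $f(t) = H_{t,\Box} = \sum_i \alpha_i(t) f(t_i) = \mu(t)\cdot\gamma$, as desired. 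For each $\sigma \in \Sigma_k$ I would define the transition matrix $\mu(\sigma) \in \mathbb{F}^{n^k \times n}$ row by row, declaring the $(i_1,\ldots,i_k)$-th row of $\mu(\sigma)$ to be $\mu(\sigma(t_{i_1},\ldots,t_{i_k}))$.

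The main obstacle is checking that this definition is consistent with the required recursion, i.e.\ that for all $s_1,\ldots,s_k \in T_\Sigma$,
\begin{align*}
\mu(\sigma(s_1,\ldots,s_k)) = (\mu(s_1)\otimes\cdots\otimes\mu(s_k))\cdot\mu(\sigma).
\end{align*}
The key lemma is a one-coordinate linearity statement: for any $\Sigma$-context $c$, if $H_s = \sum_i \alpha_i H_{t_i}$ then $H_{c[s]} = \sum_i \alpha_i H_{c[t_i]}$ as rows of $H$. This follows from $H_{c[s],c'} = f(c'[c[s]]) = H_{s,c'[c]}$, reducing the statement to linearity in the columns of a fixed row expansion. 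Applying this lemma in each of the $k$ argument slots of $\sigma(\cdot,\ldots,\cdot)$ in turn yields the multilinear expansion $H_{\sigma(s_1,\ldots,s_k)} = \sum_{i_1,\ldots,i_k}\alpha_{i_1}(s_1)\cdots\alpha_{i_k}(s_k)\,H_{\sigma(t_{i_1},\ldots,t_{i_k})}$, which, together with identity (\ref{Kronecker_indices}) giving $(\mu(s_1)\otimes\cdots\otimes\mu(s_k))_{(i_1,\ldots,i_k)} = \alpha_{i_1}(s_1)\cdots\alpha_{i_k}(s_k)$, translates into exactly the required matrix identity. An induction on $\mathit{height}(t)$ then confirms that the extension of $\mu$ to $T_\Sigma$ built from the transition matrices $\mu(\sigma)$ agrees with the $\mu(t)$ defined above, so the MTA $(n,\Sigma,\mu,\gamma)$ recognises $f$.

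Combining the two directions, any MTA-representation of $f$ has dimension at least $\mathit{rank}(H)$, and the construction above exhibits one of dimension exactly $\mathit{rank}(H)$, so this is the minimal dimension.
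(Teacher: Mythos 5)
Your argument is correct. Note, however, that the paper does not prove Theorem~\ref{thm:Hankel} at all: it is stated as a cited result of \citet*{bozapalidis1983rank}, so there is no in-paper proof to compare against. What you have written is essentially the classical proof. The forward direction, factoring $H_{t,c}=\mu(t)\cdot\mu(c)\cdot\gamma$ so that $H$ factors through $\mathbb{F}^{1\times n}$, is the standard rank bound, and your converse correctly handles the one genuinely tree-specific issue, namely that the transition map must be multilinear in the $k$ argument slots: your one-slot lemma $H_{c[s],c'}=H_{s,c'[c]}$ applied successively to the contexts $\sigma(t_{i_1},\ldots,t_{i_{j-1}},\Box,s_{j+1},\ldots,s_k)$ yields the multilinear expansion, which identity~(\ref{Kronecker_indices}) converts into the required Kronecker-product recursion. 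It is worth observing that this is precisely the mechanism the paper does use elsewhere: the proof of Lemma~\ref{correctness:increaserank} performs the same slot-by-slot telescoping with contexts of the form $c[\sigma(t_{i_1},\ldots,t_{i_{j-1}},\Box,\tau_{j+1},\ldots,\tau_k)]$, and Step~2.2 of $\mathsf{LMTA}$ (Equation~(\ref{alg:transmatrix})) is exactly your row-by-row definition of $\mu(\sigma)$ restricted to a finite set of columns. So your proof is not only correct but also explains why the learning algorithm's hypothesis construction works; the only caveat is that in the algorithmic setting one only has finitely many columns $Y$, which is why the paper needs the counterexample-driven argument rather than your global basis of the full row space.
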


The following result by \citet*[Proposition 4]{bozapalidis1989representations} states that for any recognisable tree series, its minimal MTA-representation is unique up to change of basis. 

\begin{theorem}[\citealp{bozapalidis1989representations}]
\label{thm:minimal_unique}
Let $\Sigma$ be a ranked alphabet and $\mathbb{F}$ be a field. Let $f \in \text{Rec}(\Sigma, \mathbb{F})$ and let  $r$ be the rank (over $\mathbb{F}$) of the Hankel matrix of $f$. Let $A_{1} = (r, \Sigma, \mu_{1}, \gamma_{1})$ be an MTA-representation of $f$. Given an $\mathbb{F}$-multiplicity tree automaton $A_{2} = (r, \Sigma, \mu_{2}, \gamma_{2})$, it holds that $A_{2}$ recognises $f$ if and only if there exists an invertible matrix $U \in \mathbb{F}^{r \times r}$ such that $\gamma_{2} = U \cdot \gamma_{1}$ and $\mu_{2} (\sigma) =  U^{\otimes \mathit{rk}(\sigma)} \cdot \mu_{1} (\sigma) \cdot U^{-1}$ for every $\sigma \in \Sigma$.
\end{theorem}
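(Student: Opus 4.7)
The plan is to prove the two implications separately. The ``if'' direction is a routine induction: assuming such an invertible $U$ exists, I would show by induction on the height of $t \in T_{\Sigma}$ that $\mu_{2}(t) = \mu_{1}(t) \cdot U^{-1}$. The base case $t \in \Sigma_{0}$ is immediate since $U^{\otimes 0} = I_{1}$. For $t = \sigma(t_{1}, \ldots, t_{k})$, the induction hypothesis together with Equation~(\ref{Kronecker:kmixed}) give
\begin{align*}
\mu_{2}(t) = (\mu_{1}(t_{1}) \otimes \cdots \otimes \mu_{1}(t_{k})) \cdot (U^{-1})^{\otimes k} \cdot U^{\otimes k} \cdot \mu_{1}(\sigma) \cdot U^{-1} = \mu_{1}(t) \cdot U^{-1}.
\end{align*}
Multiplying on the right by $\gamma_{2} = U \cdot \gamma_{1}$ then yields $\|A_{2}\|(t) = \|A_{1}\|(t) = f(t)$.

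For the ``only if'' direction, suppose $A_{2}$ also recognises $f$. I would factor the Hankel matrix $H$ of $f$ through each automaton: let $F_{i}$ be the (infinite) matrix whose row indexed by $t \in T_{\Sigma}$ is $\mu_{i}(t)$, and let $B_{i}$ be the (infinite) matrix whose column indexed by $c \in C_{\Sigma}$ is $\mu_{i}(c) \cdot \gamma_{i}$. The identity $\mu_{i}(c[t]) = \mu_{i}(t) \cdot \mu_{i}(c)$ noted in the preliminaries yields $H = F_{i} \cdot B_{i}$. Since $H$ has rank $r$ by Theorem~\ref{thm:Hankel} and each $F_{i}$ has exactly $r$ columns, both $F_{i}$ must have full column rank, so the column spans of $F_{1}$ and $F_{2}$ coincide with that of $H$. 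Hence there is an invertible matrix $U \in \mathbb{F}^{r \times r}$ with $F_{2} = F_{1} \cdot U^{-1}$, i.e., $\mu_{2}(t) = \mu_{1}(t) \cdot U^{-1}$ for every $t \in T_{\Sigma}$. Since $F_{1}$ has linearly independent columns, the relation $F_{1} B_{1} = F_{1} U^{-1} B_{2}$ forces $B_{2} = U \cdot B_{1}$; evaluating the column $c = \Box$ gives $\gamma_{2} = U \cdot \gamma_{1}$.

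It remains to derive the transition-matrix relation. For each $\sigma \in \Sigma_{k}$ and $t_{1}, \ldots, t_{k} \in T_{\Sigma}$, expanding both sides of $\mu_{2}(\sigma(t_{1}, \ldots, t_{k})) = \mu_{1}(\sigma(t_{1}, \ldots, t_{k})) \cdot U^{-1}$ via the definition of $\mu_{i}$ and the mixed-product property produces
\begin{align*}
(\mu_{1}(t_{1}) \otimes \cdots \otimes \mu_{1}(t_{k})) \cdot \left[ (U^{-1})^{\otimes k} \cdot \mu_{2}(\sigma) - \mu_{1}(\sigma) \cdot U^{-1} \right] = 0.
\end{align*}
The main obstacle is concluding that the bracketed matrix vanishes. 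For this I would argue that the vectors $\mu_{1}(t_{1}) \otimes \cdots \otimes \mu_{1}(t_{k})$, as $(t_{1}, \ldots, t_{k})$ ranges over $T_{\Sigma}^{k}$, span $\mathbb{F}^{1 \times r^{k}}$: full column rank of $F_{1}$ means the rows $\{\mu_{1}(t) : t \in T_{\Sigma}\}$ already span $\mathbb{F}^{1 \times r}$, and iterated application of the bilinearity of the Kronecker product shows that $k$-fold tensor products of spanning sets span the tensor power. Therefore $\mu_{2}(\sigma) = U^{\otimes k} \cdot \mu_{1}(\sigma) \cdot U^{-1}$, completing the proof.
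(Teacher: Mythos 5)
Your proof is correct. Note that the paper itself does not prove this statement: it imports it verbatim as Proposition~4 of \citet{bozapalidis1989representations}, so there is no in-paper argument to compare against. Your argument is the standard one for such uniqueness-up-to-conjugation results, and all the key steps check out: the Hankel factorisation $H = F_i \cdot B_i$ via $\mu_i(c[t]) = \mu_i(t)\cdot\mu_i(c)$; the observation that $\mathit{rank}(H)=r$ forces each $F_i$ to have full column rank $r$, so that the column spans of $F_1$, $F_2$, and $H$ all coincide and yield an invertible $U$ with $F_2 = F_1\cdot U^{-1}$; the left-cancellation of $F_1$ to get $B_2 = U\cdot B_1$ and hence $\gamma_2 = U\cdot\gamma_1$ at the column $\Box$; and the spanning argument showing that $\{\mu_1(t_1)\otimes\cdots\otimes\mu_1(t_k)\}$ spans $\mathbb{F}^{1\times r^k}$, which is exactly what is needed to conclude that the bracketed matrix vanishes and hence $\mu_2(\sigma) = U^{\otimes k}\cdot\mu_1(\sigma)\cdot U^{-1}$ (the case $k=0$ being immediate from $F_2 = F_1\cdot U^{-1}$). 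The ``if'' direction is the routine induction you describe, using Equation~(\ref{Kronecker:kmixed}).
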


\subsection{DAG Representations of Finite Trees} 
Let $\Sigma$ be a ranked alphabet. A \emph{DAG representation of a $\Sigma$-tree} (\emph{$\Sigma$-DAG} or \emph{DAG} for short) is a rooted directed acyclic ordered multigraph whose nodes are labelled with symbols from $\Sigma$ such that the outdegree of each node is equal to the rank of the symbol it is labelled with. Formally a $\Sigma$-DAG consists of a set of nodes $V$, for each node $v \in V$ a list of successors $\mathit{succ}(v) \in V^{*}$, and a node labelling $\lambda : V \to \Sigma$ where for each node $v \in V$ it holds that $\lambda (v) \in \Sigma_{|\mathit{succ}(v)|}$. Note that $\Sigma$-trees are a subclass of $\Sigma$-DAGs. 

Let $G$ be a $\Sigma$-DAG. The \emph{size} of $G$, denoted by $\mathit{size}(G)$,  is the number of nodes in $G$. The \emph{height} of $G$, denoted by $\mathit{height}(G)$,  is the length of a longest directed path in $G$. For any node $v$ in $G$, the \emph{sub-DAG} of $G$ \emph{rooted at}  $v$, denoted by $\displaystyle {G|}_{v}$,  is the $\Sigma$-DAG consisting of the node $v$ and all of its descendants in $G$. Clearly, if a node $v_{0}$ is the root of $G$ then $\displaystyle {G|}_{v_0} = G$. The set $\{\displaystyle {G|}_{v} : v \text{ is a node in } G \}$ of all the sub-DAGs of $G$ is denoted by $\mathit{Sub}(G)$.

For any $\Sigma$-DAG  $G$, we define its \emph{unfolding} into a $\Sigma$-tree, denoted by $\mathit{unfold}(G)$, inductively as follows: If the root of $G$ is labelled with a symbol $\sigma$ and has the list of successors $v_1, \ldots, v_k$, then 
\begin{align*}
\mathit{unfold}(G) = \sigma (\mathit{unfold}(\displaystyle {G|}_{v_{1}}), \ldots, \mathit{unfold}(\displaystyle {G|}_{v_{k}})). 
\end{align*}

It is easy to see that the following proposition holds.
\begin{proposition} \label{prelimsDAG:Subs}
If $G$ is a $\Sigma$-DAG, then $\mathit{Sub}(\mathit{unfold}(G)) = \mathit{unfold}[\mathit{Sub}(G)]$.
\end{proposition}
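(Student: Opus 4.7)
The plan is to proceed by induction on $\mathit{height}(G)$, mirroring the recursive definitions of $\mathit{Sub}$ and $\mathit{unfold}$. Set-theoretic identities for images of unions together with the recursive structure of sub-DAGs and subtrees do essentially all the work; the only real content is checking that the root of $G$ unfolds to the root of $\mathit{unfold}(G)$, and that the sub-DAGs rooted at the children of the root unfold to the subtrees of $\mathit{unfold}(G)$ rooted at the children of its root.

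For the base case $\mathit{height}(G) = 0$, the DAG $G$ consists of a single node labelled by some $\sigma \in \Sigma_0$, so $\mathit{Sub}(G) = \{G\}$ and $\mathit{unfold}(G) = \sigma$, giving $\mathit{Sub}(\mathit{unfold}(G)) = \{\sigma\} = \mathit{unfold}[\{G\}] = \mathit{unfold}[\mathit{Sub}(G)]$.

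For the inductive step, suppose the root $v_0$ of $G$ is labelled $\sigma \in \Sigma_k$ with successor list $v_1, \ldots, v_k$, and write $G_i := \displaystyle {G|}_{v_i}$. From the definitions, each $G_i$ has height strictly less than $\mathit{height}(G)$, and
\begin{align*}
\mathit{Sub}(G) = \{G\} \cup \bigcup_{i=1}^{k} \mathit{Sub}(G_i).
\end{align*}
By the definition of unfolding, $\mathit{unfold}(G) = \sigma(\mathit{unfold}(G_1), \ldots, \mathit{unfold}(G_k))$, and by the definition of $\mathit{Sub}$ on trees,
\begin{align*}
\mathit{Sub}(\mathit{unfold}(G)) = \{\mathit{unfold}(G)\} \cup \bigcup_{i=1}^{k} \mathit{Sub}(\mathit{unfold}(G_i)).
\end{align*}
Applying the induction hypothesis to each $G_i$ to rewrite $\mathit{Sub}(\mathit{unfold}(G_i))$ as $\mathit{unfold}[\mathit{Sub}(G_i)]$, and then pulling $\mathit{unfold}[\,\cdot\,]$ out of the union, yields $\mathit{Sub}(\mathit{unfold}(G)) = \mathit{unfold}[\mathit{Sub}(G)]$, completing the induction.

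There is no real obstacle here beyond bookkeeping: the statement is essentially a compatibility between two parallel recursive definitions, and the only subtle point is to make sure one uses the correct notion of sub-DAG (rooted at an arbitrary node of $G$, not only at the root) so that the recursive decomposition of $\mathit{Sub}(G)$ in terms of the $\mathit{Sub}(G_i)$ holds verbatim.
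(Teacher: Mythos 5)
Your proof is correct: the paper states this proposition without proof (``It is easy to see that\ldots''), and your induction on $\mathit{height}(G)$, using the decompositions $\mathit{Sub}(G) = \{G\} \cup \bigcup_{i}\mathit{Sub}(G_i)$ and $\mathit{Sub}(\mathit{unfold}(G)) = \{\mathit{unfold}(G)\} \cup \bigcup_{i}\mathit{Sub}(\mathit{unfold}(G_i))$, is exactly the natural argument the authors are implicitly appealing to. The one point worth making explicit is why the first decomposition holds in a DAG with shared nodes --- namely that for any node $u$ of $G_i$ one has $G_i|_u = G|_u$, and that every non-root node of $G$ is a descendant of some successor $v_i$ of the root --- but this is routine and your proof is complete.
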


Because a context has exactly one occurrence of the symbol $\Box$, any
\emph{DAG representation of a $\Sigma$-context} is a $(\{\Box\} \cup
\Sigma)$-DAG that has a unique path from the root to the (unique)
$\Box$-labelled node. The \emph{concatenation} of a DAG $K$,
representing a $\Sigma$-context, and a $\Sigma$-DAG $G$, denoted by $K [G]$, is
the $\Sigma$-DAG obtained by substituting the root of $G$ for $\Box$
in $K$.

\begin{proposition} \label{prelimsDAG:concatenation} Let $K$ be a DAG
  representation of a $\Sigma$-context, and let $G$ be a
  $\Sigma$-DAG. Then, $\mathit{unfold}(K [G]) = \mathit{unfold}(K)
  [\mathit{unfold}(G)]$.
\end{proposition}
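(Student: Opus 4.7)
The plan is to proceed by induction on the height of the context DAG $K$. The structural fact I will use repeatedly is that because $K$ contains $\Box$ exactly once and is rooted, there is a unique directed path from the root of $K$ to the $\Box$-labelled node; so at every internal node along this path exactly one successor leads to $\Box$ and the remaining successors root $\Box$-free sub-DAGs, which are therefore $\Sigma$-DAGs.

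For the base case, if $\mathit{height}(K) = 0$ then $K$ consists of the single $\Box$-node, so by definition $K[G] = G$ and $\mathit{unfold}(K) = \Box$. Both sides of the stated identity collapse to $\mathit{unfold}(G)$.

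For the inductive step, suppose the root of $K$ is labelled by $\sigma \in \Sigma_k$ with the list of successors $v_1, \ldots, v_k$, and let $v_i$ be the unique successor that lies on the path to $\Box$. Then $\displaystyle{K|}_{v_i}$ is a DAG representation of a $\Sigma$-context, whereas $\displaystyle{K|}_{v_j}$ is a $\Sigma$-DAG for every $j \neq i$. Because the substitution that defines $K[G]$ only affects the unique $\Box$-node, which lies in $\displaystyle{K|}_{v_i}$ and not in $\displaystyle{K|}_{v_j}$ for $j \neq i$, one can read off that $\displaystyle{(K[G])|}_{v_i} = \displaystyle{K|}_{v_i}[G]$ and $\displaystyle{(K[G])|}_{v_j} = \displaystyle{K|}_{v_j}$ when $j \neq i$. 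Unfolding both sides of the target identity with the inductive definition of $\mathit{unfold}$, applying the induction hypothesis to $\displaystyle{K|}_{v_i}$, and using the analogous distributive property of tree-context concatenation at a root symbol (namely that $\sigma(u_1, \ldots, u_k)[t] = \sigma(u_1, \ldots, u_i[t], \ldots, u_k)$ when $\Box$ occurs in the $i$-th argument) then finishes the step.

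The main obstacle to watch for is the sharing that a DAG permits: one must confirm that replacing the $\Box$-node with the root of $G$ does not disturb the sub-DAGs $\displaystyle{K|}_{v_j}$ for $j \neq i$, even if those sub-DAGs share nodes with $\displaystyle{K|}_{v_i}$. The justification is that $\Box$ occurs exactly once in $K$, so no shared node can be the $\Box$-node, and hence the substitution is local to $\displaystyle{K|}_{v_i}$. With this observation in hand, the remaining manipulations are routine applications of the definitions of $\mathit{unfold}$ and DAG concatenation.
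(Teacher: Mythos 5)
Your proof is correct and follows essentially the same route as the paper: induction on $\mathit{height}(K)$, with the base case $K=\Box$ and the inductive step peeling off the root symbol, applying the hypothesis to the unique successor sub-DAG containing $\Box$, and leaving the other successor sub-DAGs untouched. Your explicit justification that the substitution is local to the branch containing $\Box$ (via uniqueness of the path to the $\Box$-node) is a point the paper's proof uses implicitly in its first displayed equality, so if anything your write-up is slightly more careful.
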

\begin{proof}
  The proof is by induction on $\mathit{height}(K)$. For the base case
  $\mathit{height}(K) = 0$, we have that $K = \Box$ and therefore
  $\mathit{unfold}(\Box [G]) = \mathit{unfold}(G) =
  \mathit{unfold}(\Box) [\mathit{unfold}(G)]$ for any $\Sigma$-DAG
  $G$.

For the induction step, let $h \in \mathbb{N}_0$ and assume that the
proposition holds if $\mathit{height}(K) \le h$. Let $K$ be a DAG
representation of a $\Sigma$-context such that $\mathit{height}(K) =
h+1$. Let the root of $K$ have label $\sigma$ and list of successors
$v_1, \ldots, v_k$. By definition, there is a unique path in $K$ going
from the root to the $\Box$-labelled node. Without loss of generality,
we can assume that the $\Box$-labelled node is a successor of
$v_1$. Take an arbitrary $\Sigma$-DAG $G$. Since
$\mathit{height}(\displaystyle {K|}_{v_{1}}) \le h$, we have by the
induction hypothesis that \allowdisplaybreaks \begin{align*}
  \mathit{unfold}(K [G]) & =\sigma (\mathit{unfold}(\displaystyle {K|}_{v_{1}} [G]), \mathit{unfold}(\displaystyle {K|}_{v_{2}}),  \ldots, \mathit{unfold}(\displaystyle {K|}_{v_{k}}))\\
  & =\sigma (\mathit{unfold}(\displaystyle {K|}_{v_{1}}) [\mathit{unfold}(G)], \mathit{unfold}(\displaystyle {K|}_{v_{2}}),  \ldots, \mathit{unfold}(\displaystyle {K|}_{v_{k}}))\\
  & =\sigma (\mathit{unfold}(\displaystyle {K|}_{v_{1}}), \mathit{unfold}(\displaystyle {K|}_{v_{2}}),  \ldots, \mathit{unfold}(\displaystyle {K|}_{v_{k}})) [\mathit{unfold}(G)]\\
  &= \mathit{unfold}(K) [\mathit{unfold}(G)].
\end{align*}
This completes the proof.
\end{proof}

\subsection{Multiplicity Tree Automata on DAGs}\label{prelims:mtaDAGs}
In this section, we introduce the notion of a multiplicity tree automaton on DAGs. To the best of our knowledge, this notion has not been studied before. 

Let $\mathbb{F}$ be a field, and $A = (n, \Sigma, \mu, \gamma)$  be an $\mathbb{F}$-multiplicity tree automaton. The computation of the automaton $A$ on a $\Sigma$-DAG $G = (V, E)$ is defined as follows: A \emph{run} of $A$ on $G$ is a mapping $\rho : \mathit{Sub}(G) \to \mathbb{F}^{n}$ such that for every node $v \in V$, if $v$ is labelled with $\sigma$ and has the list of successors $\mathit{succ}(v) = v_1, \ldots, v_k $ then 
\begin{align*}
\rho(\displaystyle {G|}_{v})  =  (\rho(\displaystyle {G|}_{v_{1}}) \otimes \cdots \otimes \rho(\displaystyle {G|}_{v_{k}}) ) \cdot \mu(\sigma).
\end{align*}
Automaton $A$ assigns to $G$ a \emph{weight} $\| A \| (G) \in \mathbb{F}$ where  $\| A \| (G) = \rho(G) \cdot \gamma$. 

In the following proposition, we show that the weight assigned by a multiplicity tree automaton to a DAG is equal to the weight assigned to its tree unfolding.
\begin{proposition} \label{prelimsDAG:prop_treeunfolding}
Let $\mathbb{F}$ be a field, and $A = (n, \Sigma, \mu, \gamma)$ be an $\mathbb{F}$-multiplicity tree automaton. For any $\Sigma$-DAG $G$, it holds that $\rho(G) = \mu(\mathit{unfold}(G))$ and $\| A \| (G) =  \| A \| (\mathit{unfold}(G) )$.
\end{proposition}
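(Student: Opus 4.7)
The plan is to prove the first equality $\rho(G) = \mu(\mathit{unfold}(G))$ by induction on $\mathit{height}(G)$, and then derive the second equality as an immediate consequence.

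For the base case, take $\mathit{height}(G) = 0$, so $G$ consists of a single node labelled with some $\sigma \in \Sigma_0$. Then the defining equation for $\rho$ at the root (where the Kronecker product is empty and, by convention, equals $I_1$) gives $\rho(G) = \mu(\sigma)$. On the other hand, $\mathit{unfold}(G) = \sigma$, so $\mu(\mathit{unfold}(G)) = \mu(\sigma)$, and the two sides agree.

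For the inductive step, assume the claim holds for all $\Sigma$-DAGs of height at most $h$, and let $G$ be a $\Sigma$-DAG of height $h+1$ with root labelled $\sigma \in \Sigma_k$ and successors $v_1, \ldots, v_k$. Each sub-DAG $G|_{v_i}$ has height at most $h$, so by the induction hypothesis $\rho(G|_{v_i}) = \mu(\mathit{unfold}(G|_{v_i}))$ for every $i \in [k]$. I would then chain
\begin{align*}
\rho(G) &= \bigl(\rho(G|_{v_1}) \otimes \cdots \otimes \rho(G|_{v_k})\bigr) \cdot \mu(\sigma) \\
&= \bigl(\mu(\mathit{unfold}(G|_{v_1})) \otimes \cdots \otimes \mu(\mathit{unfold}(G|_{v_k}))\bigr) \cdot \mu(\sigma) \\
&= \mu\bigl(\sigma(\mathit{unfold}(G|_{v_1}), \ldots, \mathit{unfold}(G|_{v_k}))\bigr) \\
&= \mu(\mathit{unfold}(G)),
\end{align*}
where the first equality is the defining recursion for $\rho$, the third is the extension of $\mu$ from $\Sigma$ to $T_\Sigma$, and the fourth uses the definition of $\mathit{unfold}$.

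For the second equality, simply apply the definition $\|A\|(G) = \rho(G) \cdot \gamma$ together with the first part, obtaining $\|A\|(G) = \mu(\mathit{unfold}(G)) \cdot \gamma = \|A\|(\mathit{unfold}(G))$. There is no real obstacle here: the proposition essentially says that the two definitions (the DAG run and the tree unfolding) were deliberately designed to agree, and the recursive shapes match perfectly. The only subtle point is making sure the base case is consistent with the convention that the empty Kronecker product is $I_1$, so that a leaf node yields $\rho(G) = \mu(\sigma)$ as expected.
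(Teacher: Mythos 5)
Your proof is correct and follows essentially the same route as the paper's: an induction on height establishing $\rho = \mu \circ \mathit{unfold}$ on (sub-)DAGs via the matching recursive definitions, followed by multiplying by $\gamma$. The only cosmetic difference is that you induct over all $\Sigma$-DAGs of a given height while the paper inducts over the sub-DAGs $G|_v$ of the fixed $G$; the argument is identical.
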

\begin{proof}
Let $V$ be the set of nodes of $G$. First we show that for every $v \in V$, 
\begin{align}
\rho(\displaystyle {G|}_{v}) = \mu(\mathit{unfold}(\displaystyle {G|}_{v})).  \label{prelimsDAG:prop_treeunfolding_eq1}
\end{align}
The proof is by induction on $\mathit{height}(\displaystyle {G|}_{v})$. For the base case, let $\mathit{height}(\displaystyle {G|}_{v}) = 0$. This implies that $\displaystyle {G|}_{v} = \sigma \in \Sigma_0$. Therefore, by definition we have that 
\begin{align*}
\rho(\displaystyle {G|}_{v}) = \mu (\sigma) = \mu(\mathit{unfold}(\sigma)) = \mu(\mathit{unfold}(\displaystyle {G|}_{v})).
\end{align*} 
For the induction step, let $h \in \N_{0}$ and assume that Equation (\ref{prelimsDAG:prop_treeunfolding_eq1}) holds for every $v \in V$ such that $\mathit{height}(\displaystyle {G|}_{v}) \le h$. Take any $v \in V$ such that $\mathit{height}(\displaystyle {G|}_{v}) = h+1$. Let the root of $\displaystyle {G|}_{v}$ be labelled with a symbol $\sigma$ and have list of successors $\mathit{succ}(v) = v_1, \ldots, v_k$. Then for every $j \in [k]$, we have that $\mathit{height}(\displaystyle {G|}_{v_j}) \le h$ and thus $\rho(\displaystyle {G|}_{v_j}) = \mu(\mathit{unfold}(\displaystyle {G|}_{v_j}))$ holds by the induction hypothesis. This implies that
\allowdisplaybreaks\begin{align*}
\rho(\displaystyle {G|}_{v}) &=  (\rho(\displaystyle {G|}_{v_{1}}) \otimes \cdots \otimes \rho(\displaystyle {G|}_{v_{k}}) ) \cdot \mu(\sigma)\\
&=  ( \mu(\mathit{unfold}(\displaystyle {G|}_{v_1})) \otimes \cdots \otimes \mu(\mathit{unfold}(\displaystyle {G|}_{v_k})) ) \cdot \mu(\sigma)\\
&= \mu(\sigma (\mathit{unfold}(\displaystyle {G|}_{v_{1}}), \ldots, \mathit{unfold}(\displaystyle {G|}_{v_{k}})))\\
&= \mu(\mathit{unfold}(\displaystyle {G|}_{v}))
\end{align*}
which completes the proof of Equation (\ref{prelimsDAG:prop_treeunfolding_eq1}) for all $v \in V$ by induction.

Taking $v$ to be the root of $G$, we get from Equation (\ref{prelimsDAG:prop_treeunfolding_eq1}) that $\rho(G) = \mu(\mathit{unfold}(G))$. Therefore, 
$\| A \| (G) = \rho(G) \cdot \gamma = \mu(\mathit{unfold}(G)) \cdot \gamma = \| A \| (\mathit{unfold}(G) )$. 
\end{proof}

\begin{example} \label{example:DAGsuccinctness}
Let $\Sigma = \{\sigma_{0}, \sigma_{2}\}$ be a ranked alphabet such that $\mathit{rk}(\sigma_{0}) = 0$ and $\mathit{rk}(\sigma_{2}) = 2$. Take any $n \in \N$. Let $t_{n}$, depicted in Figure \ref{fig:binarytree_n}, be the prefect binary $\Sigma$-tree  of height $n-1$. Note that $\mathit{size}(t_{n}) =  O (2^{n})$. Define a $\mathbb{Q}$-MTA $A = (n, \Sigma, \mu, e_{1})$ such that $\mu (\sigma_{0}) = e_n \in \mathbb{F}^{1 \times n}$ and $\mu (\sigma_{2}) \in \mathbb{F}^{n^{2} \times n}$ where $\mu(\sigma_{2})_{(i+1, i+1 ), i} = 1$ for every $i \in [n-1]$, and all other entries of $\mu (\sigma_{2})$ are zero. It is easy to see that $\| A \| (t_{n}) = 1$ and $\|A \|(t) = 0$ for every $t \in T_{\Sigma} \setminus \{t_{n}\}$. 

\begin{figure}
\centering
\begin{minipage}{.7\textwidth}
  \centering
 \begin{tikzpicture}[-, >=stealth', shorten >=0.5pt, auto, node distance=7cm, semithick]
  \tikzstyle{every node}=[circle,draw]
 \tikzstyle{every state}=[minimum size=4cm]
    \node [label= right: {$n$}] {$\sigma_{2}$} 
        child { node {$\sigma_{2}$} child { node {$\sigma_{2}$}  child{node [draw=none] {$\vdots$} child{node {$\sigma_{0}$}} child{node [draw=none] {} edge from parent[draw=none]}} child{node [draw=none] {} [dotted]}}
        child { node {$\sigma_{2}$} child{node [draw=none] {$\vdots$} [dotted] child{node [draw=none] {$\cdots$} edge from parent[draw=none]}}}}
        child{node [draw=none] {} edge from parent[draw=none]}
        child { node [label= right: {$n-1$}]{$\sigma_{2}$} child { node {$\sigma_{2}$} child{node [draw=none] {$\vdots$} [dotted] child{node [draw=none] {$\cdots$} edge from parent[draw=none]}}}
        child { node [label= right: {$n-2$}]{$\sigma_{2}$}  child{node [draw=none] {} [dotted]} child{node [draw=none] {$\vdots$} child{node [draw=none] {} edge from parent[draw=none]} child{node [label= right: {$1$}]{$\sigma_{0}$}}}}};
   \end{tikzpicture}
  \caption{Tree $t_{n}$}
  \label{fig:binarytree_n}
\end{minipage}%
\begin{minipage}{.35\textwidth}
  \centering
  \begin{tikzpicture}[>=stealth', -, shorten >=1pt, auto, node distance=1.5cm, semithick]
  \tikzstyle{every state}=[minimum size=0.5cm]

  \node [state] (A)[label= right: {$1$}] {$\sigma_{0}$};
  \node [draw=none] (B) [above of=A]  {$\vdots$};
  \node [state] (C) [above of=B] [label= right: {$n-2$}] {$\sigma_{2}$};
  \node [state] (D) [above of=C] [label= right: {$n-1$}] {$\sigma_{2}$};
  \node [state] (E) [above of=D]  [label= right: {$n$}] {$\sigma_{2}$};

  \path (A) edge [bend left]   node {} (B)
                 edge [bend right] node {} (B)
            (B) edge [bend left]   node {} (C)
                 edge [bend right] node {} (C)
            (C) edge [bend left]   node {} (D)
                 edge [bend right] node {} (D)
            (D) edge [bend left]   node {} (E)
                 edge [bend right] node {} (E);
\end{tikzpicture}
  \caption{DAG $G_{n}$}
  \label{fig:DAGbinary_n}
\end{minipage}
\end{figure}
Let $B$ be the $0$-dimensional $\mathbb{Q}$-MTA over $\Sigma$ (so that $\|B\| \equiv 0$). Suppose we were to check whether automata $A$ and $B$ are equivalent. Then the only counterexample to their equivalence, namely the tree $t_n$, has size $O (2^{n})$. Note, however, that $t_{n}$ has an exponentially more succinct DAG representation $G_{n}$, given in Figure \ref{fig:DAGbinary_n}. 
\end{example}

\subsection{Arithmetic Circuits}
An \emph{arithmetic circuit} is a finite directed acyclic vertex-labelled multigraph whose vertices, called \emph{gates}, have indegree $0$ or $2$. Vertices of indegree $0$ are called \emph{input gates} and are labelled with a constant $0$ or $1$, or a variable from the set $\{ x_{i} : i \in \mathbb{N}\}$. Vertices of indegree $2$ are called \emph{internal gates} and are labelled with an arithmetic operation $+$, $\times$, or $-$. We assume that there is a  unique gate with outdegree $0$ called the \emph{output gate}. An arithmetic circuit is called \emph{variable-free} if all input gates are labelled with $0$ or $1$. 

Given two gates $u$ and $v$ of an arithmetic circuit $C$, we call $u$ a \emph{child} of $v$ if $(u, v)$ is a directed edge in $C$. The \emph{size} of $C$ is the number of gates in $C$. The height of a gate $v$ in $C$, written as $\mathit{height} (v)$, is the length of a longest directed path from an input gate to $v$. The \emph{height} of $C$ is the maximal height of a gate in $C$. 

An arithmetic circuit $C$ computes a polynomial over the integers as follows: An input gate of $C$ labelled with $\alpha \in \{0, 1\} \cup \{ x_{i} : i \in \mathbb{N}\}$ computes the polynomial $\alpha$. An internal gate of $C$ labelled with $* \in \{+, \times, - \}$ computes the polynomial $p_1 * p_2$ where $p_1$ and $p_2$ are the polynomials computed by its children. For any gate $v$ in $C$, we write $f_v$ for the polynomial computed by $v$. The \emph{output} of $C$, written $f_{C}$, is the polynomial  computed by the output gate of $C$. The \emph{Arithmetic Circuit Identity Testing (\acit)} problem asks whether the output of a given arithmetic circuit is equal to the zero polynomial.

\subsection{The Learning Model}
In this paper we work with the \emph{exact learning model} of \citet*{angluin}: Let $f$ be a \emph{target function}. A \emph{Learner} (\emph{learning algorithm}) may, in each step, propose a hypothesis function $h$ 
by making an \emph{equivalence query} to a \emph{Teacher}. If $h$ is equivalent to $f$, then the Teacher returns YES and the Learner succeeds and halts. Otherwise, the Teacher returns NO with a \emph{counterexample}, which is an assignment $x$ such that $h(x) \neq f(x)$. Moreover, the Learner may query the Teacher for the value of the function $f$ on a particular assignment $x$ by making 
a \emph{membership query} on $x$. The Teacher returns the value $f(x)$.
 
We say that a class of functions $\mathscr{C}$ is \emph{exactly learnable} if there is a Learner that for any target function $f \in \mathscr{C}$, outputs a hypothesis $h \in \mathscr{C}$ such that $h(x) = f(x)$ for all assignments $x$, and does so in time polynomial in the size of a shortest representation of $f$ and the size of a largest counterexample returned by the Teacher. We moreover say that the class $\mathscr{C}$ is \emph{exactly learnable in (randomised) polynomial time} if the learning algorithm can be implemented to run in (randomised) polynomial time in the Turing model.

\section{MTA Equivalence is interreducible with $\acit$} \label{section:interreducibility}

In this section, we show that the equivalence problem for
$\mathbb{Q}$-multiplicity tree automata is logspace interreducible
with $\acit$.  A related result, characterising equivalence of
probabilistic visibly pushdown automata on words in terms of
polynomial identity testing, was shown by~\citet{KieferMOWW13}.
On several occasions in this section, we will implicitly make use of
the fact that a composition of two logspace reductions is again a logspace reduction~\citep*[Lemma 4.17]{AroraBarak2}.

\subsection{From MTA Equivalence to $\acit$}
In this section, we present a logspace reduction from the equivalence problem for $\mathbb{Q}$-MTAs to $\acit$. We start with the following lemma.

\begin{lemma} \label{tree_ac_lemma}
Given an integer $n \in \N$  and a $\mathbb{Q}$-multiplicity tree automaton $A$ over a ranked alphabet $\Sigma$, one can compute, in logarithmic space in $|A|$ and $n$, a variable-free arithmetic circuit that has output $\sum_{t \in T_{\Sigma}^{<n}} \|A\| (t)$.
\end{lemma}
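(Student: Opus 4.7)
My plan is to unroll the recursive definition of $\mu$ layer by layer, producing one ``stage'' of the circuit for each height threshold $k = 0, 1, \ldots, n$. Writing $A = (d, \Sigma, \mu, \gamma)$ and, for each $k$, setting
\[
v_k := \sum_{t \in T_{\Sigma}^{<k}} \mu(t) \in \mathbb{Q}^{1 \times d},
\]
the target quantity is $v_n \cdot \gamma$. Partitioning $T_{\Sigma}^{<k+1}$ according to the root symbol and then using multilinearity of the Kronecker product gives the recurrence
\[
v_0 = 0, \qquad v_{k+1} = \sum_{\sigma \in \Sigma} v_k^{\otimes \mathit{rk}(\sigma)} \cdot \mu(\sigma),
\]
where symbols $\sigma \in \Sigma_0$ contribute via the convention $v_k^{\otimes 0} = I_1$.

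Given this recurrence, the circuit is built in $n+1$ stages. Stage $0$ consists of $d$ input gates labelled $0$, representing $v_0$. To pass from stage $k$ to stage $k+1$, I would, for each $\sigma \in \Sigma$ of rank $r$, iteratively construct the $d^r$ entries of $v_k^{\otimes r}$ (each new entry being a single product of an entry of $v_k^{\otimes r-1}$ with an entry of $v_k$), take the inner product of the resulting vector with each column of $\mu(\sigma)$, and finally sum these contributions over $\sigma$ to obtain the $d$ output gates of stage $k+1$. A final output gate computes $v_n \cdot \gamma$. The rational constants drawn from $\mu(\sigma)$ and $\gamma$ are handled by the standard device of first multiplying through by a circuit computing a suitably large integer denominator; this scaling is non-zero and so preserves the zero-test that this lemma is intended to support in the reduction to $\acit$. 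The integer constants so obtained are hard-wired as constant sub-circuits via repeated doubling and addition.

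Since each stage contributes $\sum_{\sigma \in \Sigma} O(d^{\mathit{rk}(\sigma)+1}) = O(|A|)$ gates, the total circuit size is $O(n \cdot |A|)$. The main obstacle is the requirement of \emph{logarithmic space} for the construction algorithm. To meet it, I would assign each gate a canonical index of the form $(k, \sigma, \text{type}, i_1, \ldots, i_r)$ recording the stage $k \le n$, a symbol $\sigma \in \Sigma$, a constant-size ``type'' marker (Kronecker-power entry, inner-product accumulator, stage output, or output gate), and coordinates $i_1, \ldots, i_r \in [d]$. Such an index has bitsize $O(\log n + \log |A|)$. The two children of every internal gate are a fixed local function of its index (for instance, peeling off the last coordinate when descending from $v_k^{\otimes r}$ to $v_k^{\otimes r-1} \otimes v_k$, or looking up the appropriate entries of $\mu(\sigma)$ and the previous stage). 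A logspace transducer can therefore enumerate all valid indices in lexicographic order and emit each gate together with its two incoming edges, producing the complete circuit description on the output tape while using only $O(\log n + \log |A|)$ work space.
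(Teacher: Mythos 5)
Your proposal is correct and follows essentially the same route as the paper: derive the recurrence $v_{k+1}=\sum_{\sigma}v_k^{\otimes \mathit{rk}(\sigma)}\cdot\mu(\sigma)$ by splitting $T_{\Sigma}^{<k+1}$ on the root symbol and using multilinearity of the Kronecker product, unroll it into an $n$-stage straight-line program computing $v_n\cdot\gamma$, and argue logspace constructibility via logarithmically-sized gate indices with locally computable wiring. Your treatment of the rational constants (clearing denominators, which preserves only the zero-test) is at the same level of rigour as the paper's own remark about separately encoding numerators and denominators, so nothing further is needed.
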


\begin{proof} 
Let $A= (r, \Sigma, \mu, \gamma)$, and let $m$ be the rank of $\Sigma$. By definition, it holds that
\begin{align}
\sum_{t \in T_{\Sigma}^{<n}} \|A\| (t) = \left( \sum_{t \in T_{\Sigma}^{< n}} \mu (t)\right) \cdot \gamma. \label{tree-ac:recursionbegin}
\end{align} 

\noindent We have $\sum_{t \in T_{\Sigma}^{< 1}} \mu (t)  = \sum_{\sigma \in \Sigma_{0}} \mu(\sigma)$. For every $i \in \N$, it holds that 
\begin{align*}
T_{\Sigma}^{< i+1} = \{\sigma(t_{1}, \ldots,t_{k}) : k \in \{0, \ldots,  m\}, \sigma \in \Sigma_{k}, t_{1}, \ldots,t_{k} \in T_{\Sigma}^{< i}\}
\end{align*}
and thus by bilinearity of Kronecker product,
\allowdisplaybreaks \begin{align}
\sum\limits_{t \in T_{\Sigma}^{< i+1}} \mu (t) 
&= \sum\limits_{k=0}^{m} \sum\limits_{\sigma \in \Sigma_{k}} \sum\limits_{t_{1} \in T_{\Sigma}^{< i}} \cdots \sum\limits_{t_{k} \in T_{\Sigma}^{< i}} \left( \mu(t_{1}) \otimes \cdots \otimes \mu(t_{k}) \right) \cdot \mu(\sigma) \notag\\
&= \sum\limits_{k=0}^{m} \sum\limits_{\sigma \in \Sigma_{k}} \left( \left( \sum_{t_{1} \in T_{\Sigma}^{< i}} \mu (t_{1}) \right)  \otimes \cdots \otimes \left( \sum_{t_{k} \in T_{\Sigma}^{< i}} \mu (t_{k}) \right) \right) \cdot \mu(\sigma) \notag\\
&= \sum\limits_{k=0}^{m} \left(\sum_{t \in T_{\Sigma}^{< i}} \mu (t) \right)^{\otimes k} \sum\limits_{\sigma \in \Sigma_{k}} \mu(\sigma). \label{eq:tree_ac_lemma}
\end{align}
In the following we define a variable-free arithmetic circuit $\Phi$ that has output  $\sum_{t \in T_{\Sigma}^{<n}} \|A\| (t)$. First, let us denote $G (i) := \sum_{t \in T_{\Sigma}^{<i}} \mu (t)$ for every $i \in \N$. Then by Equation (\ref{eq:tree_ac_lemma}) we have $G (i+1) =  \sum_{k=0}^{m} G (i)^{\otimes k} \cdot S(k)$
where $S(k) := \sum_{\sigma \in \Sigma_{k}} \mu(\sigma)$ for every $k \in \{0, \ldots, m\}$. In coordinate notation, for every $j\in [r]$ we have by Equation (\ref{Kronecker_indices}) that
\allowdisplaybreaks \begin{align}
G (i+1)_{j} =  \sum\limits_{k=0}^{m} \sum\limits_{(l_{1},\ldots, l_{k}) \in [r]^k} \prod_{a=1}^{k} G (i)_{l_{a}} \cdot S(k)_{(l_{1}, \ldots,l_{k}) , j}. \label{tree-ac:slpexpression}
\end{align} 

\noindent We present $\Phi$ as a straight-line program, with built-in constants
\begin{align*}
\{ \mu^{\sigma} _{(l_{1},\ldots, l_{k}), j}, \gamma_{j} : k \in \{0, \ldots, m\}, \sigma \in \Sigma_k, (l_{1}, \ldots, l_{k}) \in [r]^{k}, j \in [r] \}
\end{align*}
representing the entries of the transition matrices and the final weight vector of $A$, internal variables 
$\{ s^{k} _{(l_{1}, \ldots, l_{k}), j} : k \in \{0, \ldots, m\}, (l_{1}, \ldots,l_{k}) \in [r]^k, j \in [r] \}$ and $\{ g_{i, j} : i \in [n], j \in [r] \}$ evaluating the entries of matrices $S(k)$ and vectors $G(i)$ respectively, and the final internal variable $f$ computing the value of $\Phi$.

Formally, the straight-line program $\Phi$ is given in Table \ref{MTAreduction:slp}. Here the statements are given in indexed-sum and indexed-product notation, which can easily be expanded in terms of the corresponding binary operations. It follows from Equations (\ref{tree-ac:recursionbegin}) and (\ref{tree-ac:slpexpression}) that $\Phi$  computes $G(n) \cdot \gamma = \sum_{t \in T_{\Sigma}^{<n}} \|A\| (t)$.

\begin{table}[h]
\begin{tabular}{l}
\hline 
\multicolumn{1}{c}{\parbox{0.99\textwidth}{
\begin{enumerate}
\item For $j \in [r]$ \,do\; $\displaystyle g_{1 , j} \leftarrow \sum_{\sigma \in \Sigma_{0}} \mu^{\sigma}_{j}$
\item For $k \in \{0, \ldots, m\}$, $(l_{1}, \ldots, l_{k}) \in [r]^{k}$, $j \in [r]$ \,do\;
$\displaystyle s^{k}_{(l_{1}, \ldots,l_{k}), j} \leftarrow \sum _{\sigma \in \Sigma_{k}} \mu^{\sigma} _{(l_{1}, \ldots,l_{k}), j}$

\item  For $i = 1$ to $n-1$ \,do\;
\begin{enumerate}
\item[3.1.] For $k \in \{0, \ldots, m\}$, $(l_{1}, \ldots, l_{k}) \in [r]^{k}$, $j \in [r]$ \,do\;
\begin{align*}
h^{i, k}_{(l_{1}, \ldots, l_{k}), j} \leftarrow \prod_{a=1}^{k} g_{i, l_{a}} \cdot s^{k}_{(l_{1}, \ldots, l_{k}), j}
\end{align*}

\item[3.2.] For $j \in [r]$ \,do\;
\begin{align*}
g_{i+1,j} \leftarrow \sum_{k=0}^{m} \sum_{ (l_{1}, \ldots, l_{k}) \in [r]^k}  h^{i, k}_{(l_{1}, \ldots, l_{k}), j} 
\end{align*}
\end{enumerate}

\item For $j \in [r]$ \,do\; $\displaystyle f_{j} \leftarrow g_{n, j} \cdot \gamma_{j}$
\item $\displaystyle f \leftarrow \sum_{j \in [r]} f_{j}$.
\end{enumerate}
}} \\
\hline 
\end{tabular}   \\
\caption{Straight-line program $\Phi$}
\label{MTAreduction:slp}
\end{table}

The input gates of $\Phi$ are labelled with rational numbers. By separately encoding numerators and denominators, we can in logarithmic space reduce $\Phi$ to an arithmetic circuit where all input gates are labelled with integers. Moreover, without loss of generality we can assume that every input gate of $\Phi$  is labelled with $0$ or $1$. Any other integer label given in binary can be encoded as an arithmetic circuit.  

Recalling that a composition of two logspace reductions is again a logspace reduction, we conclude that the entire computation takes logarithmic space in $|A|$ and $n$. 
\end{proof}

Before presenting the reduction in Proposition \ref{reduction_fromMTA}, we recall the following characterisation \citep[Theorem 4.2]{seidlfull} of equivalence of two multiplicity tree automata over an arbitrary field. 

\begin{proposition}[{\citealp{seidlfull}}] \label{equivalenceMTA_decisionprocedure}
Suppose $A$ and $B$ are multiplicity tree automata of dimension $n_1$ and $n_2$, respectively, and over a ranked alphabet $\Sigma$. Then, $A$ and $B$ are equivalent if and only if  $\|A\|(t) =\|B\|(t)$ for every $t \in T_{\Sigma}^{< n_1 + n_2}$.
\end{proposition}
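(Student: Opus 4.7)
The plan is to use a joint vector-space argument, tracking simultaneously what the two automata do on trees as their height increases. The direction ``$\Rightarrow$'' is immediate from the definition of equivalence, so the interesting direction is to show that agreement on $T_\Sigma^{<n_1+n_2}$ forces agreement everywhere. Let $A=(n_1,\Sigma,\mu_1,\gamma_1)$ and $B=(n_2,\Sigma,\mu_2,\gamma_2)$, and for each $t\in T_\Sigma$ form the concatenated row vector $\nu(t):=(\mu_1(t),\mu_2(t))\in\mathbb{F}^{1\times(n_1+n_2)}$. For every $h\in\mathbb{N}_0$ define the linear subspace
\[
W_h := \mathrm{span}\{\nu(t) : t\in T_\Sigma^{<h}\}\subseteq \mathbb{F}^{1\times(n_1+n_2)}.
\]
Observe that the linear functional $\varphi(v_1,v_2):=v_1\gamma_1-v_2\gamma_2$ satisfies $\varphi(\nu(t))=\|A\|(t)-\|B\|(t)$, so equivalence of $A$ and $B$ is the statement that $\varphi$ vanishes on all $\nu(t)$, $t\in T_\Sigma$.

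The crucial step is to prove the stabilization lemma: if $W_h=W_{h+1}$ for some $h$, then $W_h=W_{h'}$ for all $h'\ge h$. I would prove this by induction on $h'$. For the inductive step, take $t=\sigma(t_1,\ldots,t_k)\in T_\Sigma^{<h'+1}$ with $t_i\in T_\Sigma^{<h'}$. By induction each $\nu(t_i)\in W_h$, so writing $\nu(t_i)=\sum_j\alpha^{(i)}_j\,\nu(u^{(i)}_j)$ for some $u^{(i)}_j\in T_\Sigma^{<h}$ gives a single set of coefficients $\alpha^{(i)}_j$ that works \emph{simultaneously} for the $A$- and $B$-components. Expanding $\mu_1(t)$ and $\mu_2(t)$ via the tree representation and using bilinearity of the Kronecker product (as already used in the proof of Proposition~\ref{proposition_cartesian}) yields, for $\ell\in\{1,2\}$,
\[
\mu_\ell(t)=\sum_{j_1,\ldots,j_k}\alpha^{(1)}_{j_1}\cdots\alpha^{(k)}_{j_k}\;\mu_\ell\!\bigl(\sigma(u^{(1)}_{j_1},\ldots,u^{(k)}_{j_k})\bigr),
\]
where the \emph{same} scalars appear for both $\ell=1$ and $\ell=2$. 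Each tree $\sigma(u^{(1)}_{j_1},\ldots,u^{(k)}_{j_k})$ lies in $T_\Sigma^{<h+1}$, so $\nu(t)\in W_{h+1}=W_h$, completing the induction.

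Now I would combine the lemma with a pigeonhole/dimension count. The ascending chain $W_0\subseteq W_1\subseteq\cdots$ lives in a space of dimension $n_1+n_2$, and $W_0=\{0\}$. Among the $n_1+n_2+1$ subspaces $W_0,W_1,\ldots,W_{n_1+n_2}$, two consecutive ones must have equal dimension, hence be equal; by the stabilization lemma the chain is then constant from that index on, so in particular $W_{n_1+n_2}=W_h$ for every $h\ge 0$, meaning $W_{n_1+n_2}$ contains $\nu(t)$ for every $t\in T_\Sigma$. The hypothesis that $\|A\|(t)=\|B\|(t)$ on $T_\Sigma^{<n_1+n_2}$ says exactly that $\varphi$ vanishes on a spanning set of $W_{n_1+n_2}$, hence on all of $W_{n_1+n_2}$, hence on $\nu(t)$ for every $t\in T_\Sigma$, which is the desired equivalence.

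I expect the main obstacle to be setting up the stabilization lemma cleanly: one must resist the temptation to reason about $A$ and $B$ separately, because a single component might have a smaller-dimensional span, and the whole point of working in $W_h$ is that the joint vectors force a \emph{common} coefficient vector that pushes through the Kronecker product simultaneously in both coordinates. Once this is in hand, the rest is a standard ascending-chain-and-pigeonhole argument.
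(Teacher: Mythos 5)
The paper does not prove this proposition at all: it is imported verbatim from \citet[Theorem 4.2]{seidlfull}, so there is no in-paper argument to compare against. Your proof is the standard ``reachability subspace'' argument behind that theorem, and it is essentially correct. The key points are all in place: working with the joint vectors $\nu(t)=(\mu_1(t),\mu_2(t))$ so that a single coefficient vector witnesses membership in $W_h$ for both components simultaneously, pushing those common coefficients through the $k$-fold Kronecker product by multilinearity to get the stabilization lemma, and then observing that the hypothesis makes the functional $\varphi$ vanish on a spanning set of $W_{n_1+n_2}$.

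There is one small slip in the dimension count. Among $W_0,W_1,\ldots,W_{n_1+n_2}$ there are only $n_1+n_2$ consecutive inclusions, and since $\dim W_0=0$ while the ambient dimension is $n_1+n_2$, it is perfectly possible that every inclusion is strict (so no two consecutive subspaces coincide in that range); your pigeonhole claim as stated is therefore false. The conclusion survives, however: either some $W_j=W_{j+1}$ with $j\le n_1+n_2-1$ and your stabilization lemma gives $W_h\subseteq W_{n_1+n_2}$ for all $h$, or all inclusions are strict and then $W_{n_1+n_2}$ is the whole of $\mathbb{F}^{1\times(n_1+n_2)}$, so the containment is trivial (and $\varphi$ vanishing on it forces $\gamma_1=0$ and $\gamma_2=0$). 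Equivalently, you can run the pigeonhole on the $n_1+n_2+2$ subspaces $W_0,\ldots,W_{n_1+n_2+1}$, which does force two consecutive ones to agree at some index $j\le n_1+n_2$. With that one-line repair the proof is complete.
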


\begin{proposition} \label{reduction_fromMTA}
The equivalence problem for $\mathbb{Q}$-multiplicity tree automata is logspace reducible to $\acit$.
\end{proposition}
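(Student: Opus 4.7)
The plan is to exploit Proposition~\ref{equivalenceMTA_decisionprocedure} together with the fact that over $\mathbb{Q}$ a finite sum of squared rationals vanishes iff every summand does. By that proposition, two $\mathbb{Q}$-MTAs $A$ (of dimension $n_1$) and $B$ (of dimension $n_2$) over a ranked alphabet $\Sigma$ are equivalent iff $\|A\|(t) = \|B\|(t)$ for every $t \in T_{\Sigma}^{<n_1+n_2}$. Since each such value is a fixed rational, equivalence is captured by the single numerical identity
\[
\sum_{t \in T_{\Sigma}^{<n_1+n_2}} \bigl(\|A\|(t) - \|B\|(t)\bigr)^{2} \;=\; 0,
\]
which expands to
\[
\sum_{t} \|A\|(t)^{2} \;-\; 2\sum_{t} \|A\|(t)\|B\|(t) \;+\; \sum_{t} \|B\|(t)^{2} \;=\; 0.
\]

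To realise this numerically I would proceed in three logspace stages. First, apply Proposition~\ref{proposition_cartesian} to construct the product automata $A \times A$, $A \times B$, and $B \times B$, which satisfy $\|A \times A\|(t) = \|A\|(t)^{2}$, $\|A \times B\|(t) = \|A\|(t)\|B\|(t)$, and $\|B \times B\|(t) = \|B\|(t)^{2}$. Second, apply Lemma~\ref{tree_ac_lemma} with parameter $n := n_1 + n_2$ to each of these three automata, obtaining variable-free arithmetic circuits $\Phi_{AA}$, $\Phi_{AB}$, and $\Phi_{BB}$ whose outputs are the three sums appearing above. Third, glue these circuits together with a constant-size gadget computing $\Phi_{AA} - (1+1)\cdot\Phi_{AB} + \Phi_{BB}$, where the scalar $2$ is encoded by the sum of two input gates labelled $1$. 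Since the composition of logspace reductions is again a logspace reduction, the whole construction runs in logarithmic space in $|A| + |B|$, and the resulting circuit outputs zero iff $A$ and $B$ are equivalent.

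The only nontrivial point to justify is the sum-of-squares step: it depends on working over an ordered field where $a^{2} \geq 0$, so that vanishing of a finite sum of squares forces vanishing of each summand; over $\mathbb{Q}$ this is immediate, but the argument would fail over, say, $\mathbb{C}$ or a field of positive characteristic, where one would need a different device. Everything else is bookkeeping: verifying that the three applications of Lemma~\ref{tree_ac_lemma} really do compose into a single logspace transducer, and that the final combination of $\Phi_{AA}$, $\Phi_{AB}$, $\Phi_{BB}$ adds only a constant-size subcircuit to the output of that transducer.
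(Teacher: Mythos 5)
Your proposal is correct and follows essentially the same route as the paper's proof: reduce equivalence to the vanishing of $\sum_{t \in T_{\Sigma}^{<n_1+n_2}} (\|A\|(t)-\|B\|(t))^{2}$, realise the three expanded sums via the product automata $A \times A$, $A \times B$, $B \times B$ together with Lemma~\ref{tree_ac_lemma}, and combine the resulting circuits in logspace. Your explicit remark that the sum-of-squares step relies on $\mathbb{Q}$ being ordered is a point the paper leaves implicit, but otherwise the arguments coincide.
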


\begin{proof} 
Let $A$ and $B$ be $\mathbb{Q}$-multiplicity tree automata over a ranked alphabet $\Sigma$, and let $n$ be the sum of their dimensions. Proposition \ref{proposition_cartesian} implies that
\allowdisplaybreaks\begin{align*}
\displaystyle\sum\limits_{t \in T_{\Sigma}^{<n}}\left(\|A\|(t) - \|B\|(t)\right)^{2} & = \displaystyle\sum\limits_{t \in T_{\Sigma}^{<n}} \left(\|A\|(t)^{2} + \|B\|(t)^{2} - 2  \|A\|(t) \|B\|(t)\right)\\
& = \displaystyle\sum\limits_{t \in T_{\Sigma}^{<n}} \left(\| A \times A \|(t) + \| B \times B \|(t)  - 2 \| A \times B \|(t)\right).
\end{align*}

\noindent Thus by Proposition \ref{equivalenceMTA_decisionprocedure}, automata $A$ and $B$ are equivalent if and only if
\begin{align}
\displaystyle\sum\limits_{t \in T_{\Sigma}^{<n}} \| A \times A \|(t) + \sum\limits_{t \in T_{\Sigma}^{<n}} \| B \times B \|(t)  - 2 \sum\limits_{t \in T_{\Sigma}^{<n}} \| A \times B \|(t) = 0. \label{reduction_fromMTA_criterion}
\end{align}

We know from Proposition \ref{proposition_cartesian} that automata  $A \times A$, $B \times B$, and $A \times B$ can be computed in logarithmic space. Thus by Lemma \ref{tree_ac_lemma} one can compute, in logarithmic space in $|A|$ and $|B|$, variable-free arithmetic circuits that have outputs $\sum_{t \in T_{\Sigma}^{<n}}  \| A \times A \|(t)$, $\sum_{t \in T_{\Sigma}^{<n}} \| B \times B \|(t)$, and $\sum_{t \in T_{\Sigma}^{<n}} \| A \times B \|(t) $ respectively. Using Equation (\ref{reduction_fromMTA_criterion}), we can now easily construct a variable-free arithmetic circuit that has output $0$ if and only if $A$ and $B$ are equivalent.  
\end{proof}

\subsection{From $\acit$ to MTA Equivalence}
We now present a converse reduction: from $\acit$ 
to the equivalence problem for $\mathbb{Q}$-MTAs. 

\citet[Proposition 2.2]{Allender_variablefree}  give a logspace reduction of the general $\acit$ problem to the special case of $\acit$ for variable-free circuits. The latter can, by representing arbitrary integers as differences of two non-negative integers, be reformulated as the problem of deciding whether two variable-free arithmetic circuits with only $+$ and $\times$-internal gates compute the same number.

\begin{proposition}
$\acit$ is logspace reducible to the equivalence problem for $\mathbb{Q}$-multiplicity tree automata.
\end{proposition}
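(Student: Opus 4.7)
The plan is to compose the \citet{Allender_variablefree} logspace reduction of \acit{} to the variable-free case with a product construction that converts a variable-free arithmetic circuit $C$ into an MTA $A_C$ whose recognised tree series vanishes identically iff $V_C = 0$. Using \citet{Allender_variablefree}, it suffices to reduce in logspace the following problem to MTA equivalence: given a variable-free arithmetic circuit $C$ with internal gates in $\{+, \times, -\}$ and leaves in $\{0, 1\}$, decide whether $V_C = 0$.

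Fix the ranked alphabet $\Sigma$ whose nullary symbols are $0, 1$ and whose binary symbols are $+, \times, -$. First I would exhibit a constant-size ``formula evaluator'' MTA $F = (2, \Sigma, \mu_F, \gamma_F)$ satisfying $\mu_F(t) = (\mathrm{val}(t), 1)$ for every $t \in T_{\Sigma}$, where $\mathrm{val}(t)$ is the integer value of $t$ read as an arithmetic formula. Concretely, take $\mu_F(0) = (0, 1)$, $\mu_F(1) = (1, 1)$, $\gamma_F = (1, 0)^{\top}$, and, for $\star \in \{+, \times, -\}$, let $\mu_F(\star) \in \mathbb{Q}^{4 \times 2}$ be the (uniquely determined) matrix sending $(v_1, 1) \otimes (v_2, 1) = (v_1 v_2, v_1, v_2, 1)$ to $(v_1 \star v_2, 1)$. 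Its entries lie in $\{-1, 0, 1\}$ and can be hard-coded, and a straightforward induction on tree height then gives $\|F\|(t) = \mathrm{val}(t)$.

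Next I would build a ``DAG-recognition'' MTA $I_C = (n, \Sigma, \mu_I, \gamma_I)$ of dimension $n = |C|$, with one coordinate per gate of $C$. For $\sigma \in \Sigma_k$, set $\mu_I(\sigma)_{(v_1, \ldots, v_k), v} = 1$ if gate $v$ is $\sigma$-labelled with successor list $(v_1, \ldots, v_k)$ and $0$ otherwise, and let $\gamma_I$ be the indicator vector of the output gate. Viewing $C$ as a $\Sigma$-DAG $G_C$, an induction on tree height (parallel to Proposition~\ref{prelimsDAG:prop_treeunfolding}) gives $\mu_I(t)_v = 1$ exactly when $t = \mathit{unfold}(\displaystyle {G_C|}_{v})$, so $\|I_C\|(t) = \chi_{\{t_C\}}(t)$ where $t_C := \mathit{unfold}(G_C)$. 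Setting $A_C := F \times I_C$, Proposition~\ref{proposition_cartesian} yields $\|A_C\|(t) = \|F\|(t) \cdot \|I_C\|(t) = V_C \cdot \chi_{\{t_C\}}(t)$, so $A_C$ is equivalent to the $0$-dimensional MTA iff $V_C = 0$. The logspace bounds are immediate: $F$ has constant size, $I_C$ is produced by a gate-local inspection of $C$, and $A_C$ by Proposition~\ref{proposition_cartesian}; composing with the Allender reduction finishes the argument.

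The technical point I expect to have to argue most carefully is the correctness of $I_C$ in the presence of redundancy in $G_C$. Distinct gates $v \neq v'$ bearing the same label and the same successor list force $\mathit{unfold}(\displaystyle {G_C|}_{v}) = \mathit{unfold}(\displaystyle {G_C|}_{v'})$, so several coordinates of $\mu_I(t)$ can simultaneously equal $1$; this is harmless because $\|I_C\|(t) = \mu_I(t) \cdot \gamma_I$ reads only the root coordinate via $\gamma_I$, but the induction must be phrased coordinate-wise (``is $t$ the unfolding of $G_C|_v$?'') rather than in terms of a unique accepting state as one would for an ordinary deterministic finite tree automaton. Once that is in place, the observation that a composition of logspace reductions is a logspace reduction (invoked implicitly throughout this section) converts the whole pipeline into a single logspace reduction from \acit{} to MTA equivalence.
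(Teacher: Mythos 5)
Your reduction is correct, but it is genuinely different from the one in the paper. The paper also starts from the variable-free case via \citet{Allender_variablefree}, but then reformulates it as equivalence of two $\{+,\times\}$-circuits, levels each circuit by padding (so that both children of a height-$i$ gate have height $i-1$, $+$-gates have even height and $\times$-gates odd height), and builds one automaton per circuit, with one state per gate, that evaluates the circuit along a single canonical alternating tree $t_h$ and vanishes elsewhere; the output instance is the equivalence of these two automata. You instead keep a single variable-free circuit with $\{+,\times,-\}$ gates and factor the construction as a product $F\times I_C$: a fixed two-dimensional evaluator $F$ with $\mu_F(t)=(\mathrm{val}(t),1)$ (your matrices $\mu_F(\star)$ are indeed uniquely determined, since the vectors $(v_1v_2,v_1,v_2,1)$ span $\mathbb{Q}^4$ and each of $v_1+v_2$, $v_1v_2$, $v_1-v_2$ is linear in these coordinates), and a $\{0,1\}$-valued recogniser $I_C$ of the unfolding of the circuit DAG, whose coordinatewise correctness you rightly insist on phrasing as an indicator per gate rather than via a unique accepting state. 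Proposition~\ref{proposition_cartesian} then gives $\|A_C\|\equiv 0$ iff the circuit evaluates to $0$, and the logspace bookkeeping is as you say. What your route buys is modularity: no levelling or parity normalisation of the circuit, direct handling of $-$-gates, reuse of the already-established logspace product construction, and a reduction targeting equivalence with the zero automaton. What the paper's route buys is a hypothesis automaton of dimension exactly the number of gates (no doubling via the product) and no reliance on the product construction at all, at the cost of the padding argument and of producing a two-automaton instance. Both are valid logspace reductions establishing the proposition.
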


\begin{proof}
Let $C_1$ and $C_2$ be two variable-free arithmetic circuits whose internal gates are labelled with $+$ or $\times$. By padding with extra gates, without loss of generality we can assume that in each circuit the children of a height-$i$ gate both have height $i-1$, $+$-gates have even height, $\times$-gates have odd height, and the output gate has an even height $h$. 

In the following we define two $\mathbb{Q}$-MTAs, $A_1$ and $A_2$, that are equivalent if and only if circuits $C_1$ and $C_2$ have the same output. Automata $A_1$ and $A_2$ are both defined over a ranked alphabet $\Sigma = \{\sigma_{0}, \sigma_{1}, \sigma_{2}\}$ where $\sigma_{0}$ is a nullary, $\sigma_{1}$ a unary, and $\sigma_{2}$ a binary symbol. Intuitively, automata $A_{1}$ and $A_{2}$ both recognise the common `tree-unfolding'  of  $C_1$ and $C_2$. 

We now derive $A_{1}$ from $C_1$; $A_{2}$ is analogously derived from $C_2$.
Let $\{v_{1}, \ldots, v_{r} \}$ be the set of gates of $C_1$ where $v_{r}$ is the output gate. Automaton $A_{1}$ has a state $q_{i}$ for every gate $v_{i}$ of $C_1$. 
Formally, $A_{1} = (r,\Sigma, \mu, e_r)$ where for every $i \in [r]$:
\allowdisplaybreaks\begin{itemize}
\item[$\bullet$] If $v_i$ is an input gate with label $1$ then $\mu (\sigma_{0})_{i} = 1$, otherwise $\mu (\sigma_{0})_{i} = 0$.
\item[$\bullet$] If $v_{i}$ is a $+$-gate with children $v_{j_1}$ and $v_{j_2}$ then $\mu (\sigma_{1})_{j_1, i}= \mu (\sigma_{1})_{j_2, i}= 1$ if $j_{1} \neq j_{2}$, $\mu (\sigma_{1})_{j_1, i}= 2$ if $j_{1} = j_{2}$, and $\mu (\sigma_{1})_{l, i}= 0$ for every $l \not\in \{j_{1}, j_{2}\}$. If $v_i$ is an input gate or a $\times$-gate then $\mu (\sigma_{1})^{i}= 0_{r \times 1}$. 
\item[$\bullet$] If $v_{i}$ is a $\times$-gate with children $v_{j_1}$ and $v_{j_2}$ then $\mu (\sigma_{2})_{(j_1, j_2), i} = 1$, and $\mu (\sigma_{2})_{(l_1, l_2), i} = 0$ for every $(l_1, l_2) \neq (j_1, j_2)$. If $v_i$ is an input gate or a $+$-gate then $\mu (\sigma_{2})^{i}= 0_{r^{2} \times 1}$. 
\end{itemize}

Define a sequence of trees $(t_n)_{n \in \N_0} \subseteq T_{\Sigma}$ by $t_0 = \sigma_0$, $t_{n+1} = \sigma_1 (t_{n})$ for $n$ odd, and $t_{n+1} = \sigma_2 (t_{n}, t_{n})$ for $n$ even. In the following, we show that $\|A_{1}\|(t_{h}) = f_{C_1}$. 
For every gate $v$ of $C_1$, by assumption it holds that all paths from $v$ to the output gate have equal length.  We now prove that for every $i \in [r]$, 
\allowdisplaybreaks\begin{align}
\mu (t_{h_{i}})_{i} = f_{v_i} \label{claim:ac_tree_reductionproperty}
\end{align}
where  $h_i := \mathit{height} (v_i)$. We use induction on $h_i \in \{0, \ldots, h\}$. For the base case, let $h_i = 0$. Then, $v_i$ is an input gate and thus by definition of automaton $A_1$ we have 
\begin{align*}
\mu(t_{h_{i}})_{i} = \mu(t_{0})_{i} =\mu(\sigma_0)_{i} =  f_{v_i}.
\end{align*} 
For the induction step, let $n \in [h]$ and assume that Equation (\ref{claim:ac_tree_reductionproperty}) holds for every gate $v_i$ of height less than $n$. Take an arbitrary gate $v_i$ of $C_1$ such that $h_i = n$. Let gates $v_{j_{1}}$ and $v_{j_{2}}$ be the children of $v_{i}$. Then $h_{j_{1}} = h_{j_{2}} = h_i - 1 = n -1$ by assumption. The induction hypothesis now implies that $\mu (t_{h_{i}-1})_{j_{1}} = f_{v_{j_{1}}}$ and $\mu (t_{h_{i}-1})_{j_{2}} = f_{v_{j_{2}}}$. Depending on the label of $v_{i}$, there are two possible cases as follows:
\begin{enumerate} [(i)]
\item If $v_{i}$ is a $+$-gate, then $h_i$ is even and thus by definition of $A_1$ we have
\begin{align*}
\mu (t_{h_i})_{i} &= \mu (\sigma_1(t_{h_{i} -1}))_{i} = \mu (t_{h_{i} -1})\cdot \mu (\sigma_1)^{i}\\
&= \mu (t_{h_i -1})_{j_{1}} + \mu (t_{h_i -1})_{j_{2}} = f_{v_{j_{1}}} +  f_{v_{j_{2}}} = f_{v_i}.
\end{align*}

\item If $v_{i}$ is a $\times$-gate, then $h_{i}$ is odd and thus by definition of $A_1$ and Equation (\ref{Kronecker_indices}) we have
\begin{align*}
\mu (t_{h_i})_{i} &= \mu (\sigma_2 (t_{h_i -1}, t_{h_i -1}))_{i} = \mu (t_{h_i -1})^{\otimes  2} \cdot \mu (\sigma_2)^{i}\\
&= \mu (t_{h_i -1})_{j_{1}} \cdot \mu (t_{h_i -1})_{j_{2}} = f_{v_{j_{1}}} \cdot  f_{v_{j_{2}}} = f_{v_i}.
\end{align*}
\end{enumerate}
This completes the proof of Equation (\ref{claim:ac_tree_reductionproperty}) by induction. Now for the output gate $v_r$ of $C_1$,  we get from Equation (\ref{claim:ac_tree_reductionproperty}) that $\mu (t_{h})_{r} = f_{v_r}$ since $h_r = h$.  
Therefore,
\begin{align*}
\|A_{1}\| (t_{h}) = \mu (t_{h}) \cdot e_{r} = \mu (t_{h})_{r}  = f_{v_r} = f_{C_1}. 
\end{align*}

Analogously, it holds that $\|A_{2}\|(t_{h}) = f_{C_2}$. It is moreover clear by construction that $\|A_{1}\|(t) = 0$ and $\|A_{2}\|(t) = 0$ for every $t \in T_{\Sigma} \setminus \{t_{h}\}$. Therefore, automata $A_{1}$ and $A_{2}$ are equivalent if and only if arithmetic circuits $C_1$ and $C_2$ have the same output.
\end{proof}

\section{The Learning Algorithm} \label{section:thealgorithm}

In this section, we give an exact learning algorithm for multiplicity
tree automata. Our algorithm is polynomial in the size of a minimal automaton equivalent to the target and the size of a largest counterexample given as a DAG. As seen in Example \ref{example:DAGsuccinctness}, DAG counterexamples can be exponentially more succinct than tree counterexamples. Therefore, achieving a polynomial bound in the context of DAG representations is a more exacting criterion.

Over an arbitrary field $\mathbb{F}$, the algorithm can be seen as
running on a Blum-Shub-Smale machine that can write and
read field elements to and from its memory at unit cost and that can also
perform arithmetic operations and equality tests on field elements at unit
cost~\citep[see][]{AroraBarak2}. Over $\mathbb{Q}$, the algorithm can
be implemented in randomised polynomial time by representing rationals
as arithmetic circuits and using a coRP algorithm for equality testing
of such circuits~\citep[see][]{Allender_variablefree}.

This section is organised as follows: In Section \ref{subsect:algorithm} we present the algorithm. In Section \ref{subsect:correctnessalg} we prove correctness on trees, and then argue in Section \ref{subsect:DAGrepresentation} that the algorithm can be faithfully implemented using a DAG representation of trees. Finally, in Section \ref{subsect:complexity} we give a complexity analysis of the algorithm assuming the DAG representation.

\subsection{The Algorithm} \label{subsect:algorithm}

Let $f \in \text{Rec}(\Sigma, \mathbb{F})$ be the target function. The algorithm learns an MTA-representation of $f$ using its Hankel matrix $H$, which has finite rank over $\mathbb{F}$ by Theorem \ref{thm:Hankel}. At each stage, the algorithm maintains the following data: 
\allowdisplaybreaks \begin{itemize}
\item[$\bullet$] An integer $n \in \mathbb{N}$. 
\item[$\bullet$] A set of $n$ `rows' $X = \{ t_1,  \ldots, t_{n} \} \subseteq T_{\Sigma}$. 
\item[$\bullet$] A finite set of `columns' $Y \subseteq C_{\Sigma}$, where $\Box \in Y$.
\item[$\bullet$] A submatrix $H_{X, Y}$ of $H$ that has full row rank. 
\end{itemize}
These data determine a \textit{hypothesis automaton} $A$ of dimension $n$, whose states correspond to the rows of $H_{X, Y}$, with the $i^\text{th}$ row being the state reached after reading the tree $t_i$. The Learner makes an equivalence query on hypothesis $A$. In case the Teacher answers NO, the Learner receives a counterexample $z$. The Learner then parses $z$ bottom-up to find a minimal subtree of $z$ that is also a counterexample, and uses that subtree to augment the sets of rows and columns.

Formally, the algorithm $\mathsf{LMTA}$ is given in Table \ref{alg:exactlearnMTA}. Here for any $k$-ary symbol $\sigma \in \Sigma$ we define $\sigma (X, \ldots, X) := \{\sigma(t_{i_1}, \ldots, t_{i_{k}}) : (i_1, \ldots, i_{k}) \in [n]^{k}\}$.

\begin{table}[!h]
\begin{tabular}{l}
\hline 
\textbf{Algorithm $\mathsf{LMTA}$} \\
\hline
\multicolumn{1}{c}{\parbox{0.99\textwidth}{
\vspace{2mm}
\textbf{Target:} $f \in \text{Rec}(\Sigma, \mathbb{F})$, where $\Sigma$ has rank $m$ and $\mathbb{F}$ is a field
\begin{enumerate}
\item Make an equivalence query on the $0$-dimensional $\mathbb{F}$-MTA over $\Sigma$. \newline
If the answer is YES then \textbf{output} the $0$-dimensional $\mathbb{F}$-MTA over $\Sigma$ and halt. \newline
Otherwise the answer is NO and $z$ is a counterexample. Initialise: \newline $n \leftarrow 1$, $t_n \leftarrow z$, $X \leftarrow \{t_n\}$, $Y \leftarrow \{\Box\}$.

\item 
\begin{enumerate}
\item[2.1.] For every $k \in \{0, \ldots, m\}$, $\sigma \in \Sigma_k$, and $(i_1, \ldots, i_{k}) \in [n]^k$: \newline If $H_{\sigma(t_{i_1}, \ldots, t_{i_{k}}), Y}$ is not a linear combination of $H_{t_1, Y}, \ldots, H_{t_n, Y}$ then \newline $n \leftarrow n+1$, $t_n \leftarrow \sigma(t_{i_1}, \ldots, t_{i_{k}})$, $X \leftarrow X \cup \{t_n\}$.
\item[2.2.]  Define an $\mathbb{F}$-MTA $A = (n, \Sigma, \mu, \gamma)$ as follows:
\begin{itemize}
\item[$\bullet$] $\gamma = H_{X, \Box}$.
\item[$\bullet$] For every $k \in \{0,  \ldots, m\}$ and $\sigma \in \Sigma_k$:  \newline  Define matrix $\mu(\sigma) \in \mathbb{F}^{n^k \times  n}$ by the equation
\begin{align}
\mu(\sigma) \cdot H_{X, Y} = H_{\sigma (X, \ldots, X), Y}. \label{alg:transmatrix}
\end{align}
\end{itemize}
\end{enumerate}

\item  \begin{enumerate}
\item[3.1.] Make an equivalence query on $A$. \newline
If the answer is YES then \textbf{output} $A$ and halt. \newline
Otherwise the answer is NO and $z$ is a counterexample. Searching bottom-up, find a subtree $\sigma(\tau_{1}, \ldots, \tau_{k})$ of $z$ that satisfies the following two conditions:
\begin{enumerate}[(i)]
\item For every $j \in [k]$, $H_{\tau_j, Y} = \mu(\tau_j) \cdot H_{X, Y}$.
\item For some $c \in Y$, $H_{\sigma(\tau_{1}, \ldots, \tau_{k}), c} \neq  \mu(\sigma(\tau_{1}, \ldots, \tau_{k}))  \cdot H_{X, c}$.
\end{enumerate}

\item[3.2.] For every $j \in [k]$ and $(i_1, \ldots, i_{j-1}) \in [n]^{j-1}$: \newline $Y \leftarrow Y \cup \{c[ \sigma( t_{i_{1}}, \ldots, t_{i_{j-1}}, \Box, \tau_{j+1}, \ldots, \tau_{k})]\}$.
\item[3.3.] For every $j \in [k]$: \newline If $H_{\tau_j, Y}$ is not a linear combination of $H_{t_1, Y}, \ldots, H_{t_n, Y}$ then \newline $n \leftarrow n+1$, $t_n \leftarrow \tau_j$, $X \leftarrow X \cup \{t_n\}$.
\item[3.4.] Go to 2.
\end{enumerate}
\end{enumerate}
}} \\
\hline 
\end{tabular}   \\
\caption{Exact learning algorithm $\mathsf{LMTA}$ for the class of multiplicity tree automata}
\label{alg:exactlearnMTA}
\end{table}

\subsection{Correctness Proof} \label{subsect:correctnessalg}
In this section, we prove the correctness of the exact learning algorithm $\mathsf{LMTA}$. Specifically, we show that, given a target $f \in \text{Rec}(\Sigma, \mathbb{F})$, algorithm $\mathsf{LMTA}$ outputs a minimal MTA-representation of $f$ after at most $\mathit{rank}(H)$ iterations of the main loop. 

The correctness proof naturally breaks down into several lemmas.
First, we show that matrix $H_{X, Y}$  has full row rank.

\begin{lemma} \label{correctness:linindependence}
Linear independence of the set of vectors $\{ H_{t_1, Y}, \ldots, H_{t_n, Y} \}$ is an invariant of the loop consisting of Step 2 and Step 3. 
\end{lemma}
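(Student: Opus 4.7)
The plan is to proceed by induction on the number of loop iterations, checking that each of the three places where either $X$ or $Y$ changes preserves linear independence of the rows $\{H_{t_1,Y}, \ldots, H_{t_n,Y}\}$.

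For the base case, at entry into the loop we have $n=1$, $X=\{z\}$ and $Y=\{\Box\}$, where $z$ is the counterexample returned to the equivalence query on the $0$-dimensional MTA. Since that MTA recognises the zero tree series, we must have $f(z)\neq 0$, so $H_{z,\Box}=f(\Box[z])=f(z)\neq 0$. A singleton of a nonzero vector is linearly independent, giving the base case.

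For the inductive step, I would argue each modification separately. First, Step~2.1 and Step~3.3 extend $X$ by a new tree $t_{n+1}$ \emph{only} when $H_{t_{n+1},Y}$ is, by the guard of the \textbf{if} statement, not in the span of $H_{t_1,Y},\ldots,H_{t_n,Y}$; thus appending $H_{t_{n+1},Y}$ to a linearly independent family yields a linearly independent family, and the invariant is maintained. Second, Step~3.2 enlarges $Y$ to some $Y'\supseteq Y$ but leaves $X$ unchanged. Here I would use the easy observation that if $\sum_{i=1}^n \alpha_i H_{t_i,Y'}=0$, then restricting to the sub-tuple of columns indexed by $Y\subseteq Y'$ gives $\sum_{i=1}^n \alpha_i H_{t_i,Y}=0$, and the inductive hypothesis forces every $\alpha_i=0$. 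Hence linear independence is preserved by column-addition.

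There is no real obstacle: the only mildly subtle point is that the invariant is about the current pair $(X,Y)$, and $Y$ grows too, so one must be careful to verify both directions (adding rows with an explicit linear-independence check on the current $Y$, and adding columns without disturbing the previously established linear independence). Together with the base case, induction on the number of updates performed by the body of the loop then yields that $\{H_{t_1,Y},\ldots,H_{t_n,Y}\}$ is linearly independent at every entry to and exit from Steps~2 and~3, proving the lemma.
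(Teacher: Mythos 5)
Your proof is correct and follows essentially the same route as the paper: induction on the loop iterations, noting that rows are added to $X$ only after an explicit independence check (Steps 2.1 and 3.3) and that enlarging $Y$ in Step 3.2 cannot destroy independence since a dependence among the extended rows would restrict to one among the original columns. The paper's proof is identical in structure, merely stating the column-addition step as ``clearly'' preserving independence where you spell out the restriction argument.
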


\begin{proof}
We argue inductively on the number of iterations of the loop. The base case $n=1$ clearly holds since $f(z) \neq 0$.

For the induction step, suppose that the set $\{ H_{t_1, Y}, \ldots, H_{t_n, Y} \}$ is linearly independent at the start of an iteration of the loop. If a tree $t \in T_{\Sigma}$ is added to $X$ during Step 2.1, then $H_{t, Y}$ is not a linear combination of $ H_{t_1, Y}, \ldots, H_{t_n, Y}$, and therefore $ H_{t_1, Y}, \ldots, H_{t_n, Y}, H_{t, Y}$ are linearly independent vectors. Hence, set $\{ H_{t_1, Y}, \ldots, H_{t_n, Y} \}$ is linearly independent at the start of Step 3. 

Unless the algorithm halts in Step 3.1, it proceeds to Step 3.2 where the set of columns $Y$ is increased, which clearly preserves linear independence of vectors  $H_{t_1, Y}, \ldots, H_{t_n, Y}$. If a tree $\tau_j$ is added to $X$ in Step 3.3, then $H_{\tau_j, Y}$ is not a linear combination of $H_{t_1, Y}, \ldots, H_{t_n, Y}$ which implies that the vectors $H_{t_1, Y}, \ldots, H_{t_n, Y}, H_{\tau_j, Y}$ are linearly independent. Hence, the set $\{ H_{t_1, Y}, \ldots, H_{t_n, Y} \}$ is linearly independent at the start of the next iteration of the loop. 
\end{proof}

Secondly, we show that Step 2.2 of $\mathsf{LMTA}$ can always be performed.

\begin{lemma} \label{correctness:transmatrix}
Whenever Step 2.2 starts, for every $k \in \{0,  \ldots, m\}$ and $\sigma \in \Sigma_k$ there exists a unique  matrix $\mu(\sigma) \in \mathbb{F}^{n^k \times  n}$ satisfying Equation (\ref{alg:transmatrix}).
\end{lemma}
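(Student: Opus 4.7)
The plan is to extract the two ingredients that matter at the entry to Step 2.2: (i) the rows of $H_{X,Y}$ are linearly independent, and (ii) every row of $H_{\sigma(X,\ldots,X),Y}$ lies in the row span of those rows. Existence of a suitable $\mu(\sigma)$ will come from (ii), and uniqueness from (i).

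First I would invoke Lemma~\ref{correctness:linindependence} to conclude that $H_{t_1,Y},\ldots,H_{t_n,Y}$ are linearly independent vectors in $\mathbb{F}^{1\times|Y|}$; equivalently, the matrix $H_{X,Y}$ has full row rank. Next I would analyse the exit condition of Step~2.1: the loop inside Step~2.1 continues to add a tree $\sigma(t_{i_1},\ldots,t_{i_k})$ to $X$ as long as $H_{\sigma(t_{i_1},\ldots,t_{i_k}),Y}$ is not a linear combination of the current rows; termination is guaranteed because the rows of $H_{X,Y}$ stay linearly independent, while the row rank is bounded above by $\mathit{rank}(H)$, which is finite by Theorem~\ref{thm:Hankel}. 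Hence, once Step~2.1 finishes with the final value of $n$, for every $k\in\{0,\ldots,m\}$, $\sigma\in\Sigma_k$, and $(i_1,\ldots,i_k)\in[n]^k$, the vector $H_{\sigma(t_{i_1},\ldots,t_{i_k}),Y}$ is a linear combination of $H_{t_1,Y},\ldots,H_{t_n,Y}$.

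For existence, I would then read off the coefficients: for each $(i_1,\ldots,i_k)\in[n]^k$, choose a row vector $v\in\mathbb{F}^{1\times n}$ with $v\cdot H_{X,Y}=H_{\sigma(t_{i_1},\ldots,t_{i_k}),Y}$, and place $v$ in the row of $\mu(\sigma)$ indexed by $(i_1,\ldots,i_k)$ (using the row-indexing convention from the preliminaries). Stacking these rows gives a matrix $\mu(\sigma)\in\mathbb{F}^{n^k\times n}$ satisfying Equation~(\ref{alg:transmatrix}). For uniqueness, suppose $\mu_1(\sigma)$ and $\mu_2(\sigma)$ both satisfy Equation~(\ref{alg:transmatrix}). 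Then $(\mu_1(\sigma)-\mu_2(\sigma))\cdot H_{X,Y}=0$, so each row of $\mu_1(\sigma)-\mu_2(\sigma)$ is a linear relation among the rows of $H_{X,Y}$; by (i) every such relation is trivial, giving $\mu_1(\sigma)=\mu_2(\sigma)$.

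I do not expect any real obstacle here: the statement is essentially a linear-algebra consequence of Lemma~\ref{correctness:linindependence} combined with the exit condition of Step~2.1. The only point requiring a small amount of care is making sure the termination of Step~2.1 is justified, so that one is entitled to speak of the ``final'' $n$ and $X$, and that the row-indexing of $\mu(\sigma)$ by tuples $(i_1,\ldots,i_k)$ matches the indexing of $H_{\sigma(X,\ldots,X),Y}$ used elsewhere in the paper.
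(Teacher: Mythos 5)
Your proof is correct and follows essentially the same route as the paper's: existence of each row of $\mu(\sigma)$ comes from the exit condition of Step~2.1, and uniqueness from the linear independence of $H_{t_1,Y},\ldots,H_{t_n,Y}$ guaranteed by Lemma~\ref{correctness:linindependence}. Your additional remark justifying termination of Step~2.1 via the finiteness of $\mathit{rank}(H)$ is a sound (if optional) extra precaution that the paper defers to the overall correctness argument.
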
 

\begin{proof}
Take any $(i_1, \ldots, i_k ) \in [n]^{k}$. Step 2.1 ensures that $H_{\sigma(t_{i_1}, \ldots, t_{i_{k}}), Y}$ can be represented as a linear combination of vectors $H_{t_1, Y}, \ldots, H_{t_n, Y}$. This representation is unique since $H_{t_1, Y}, \ldots, H_{t_n, Y}$ are linearly independent vectors by Lemma \ref{correctness:linindependence}. Row $\mu(\sigma)_{(i_1, \ldots, i_k )} \in \mathbb{F}^{1 \times  n}$ is therefore uniquely defined by the equation $\mu(\sigma)_{(i_1, \ldots, i_k )} \cdot H_{X, Y} = H_{\sigma(t_{i_1}, \ldots, t_{i_{k}}), Y}$. 
\end{proof}

Thirdly, we show that Step 3.1 of $\mathsf{LMTA}$ can always be performed.

\begin{lemma} \label{correctness:counterexample}
Suppose that upon making an equivalence query on $A$ in Step 3.1, the Learner receives the answer NO and a counterexample $z$. Then there exists a subtree $\sigma(\tau_{1}, \ldots, \tau_{k})$ of $z$, where $k \in \{0, \ldots, m\}$, $\sigma \in \Sigma_k$, and $\tau_1, \ldots, \tau_{k} \in T_{\Sigma}$, that satisfies the following two conditions:
\begin{enumerate}[(i)]
\item For every $j \in [k]$, $H_{\tau_j, Y} = \mu(\tau_j) \cdot H_{X, Y}$.
\item For some $c \in Y$, $H_{\sigma(\tau_{1}, \ldots, \tau_{k}), c} \neq  \mu(\sigma(\tau_{1}, \ldots, \tau_{k}))  \cdot H_{X, c}$.
\end{enumerate}
\end{lemma}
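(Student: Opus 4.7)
The proof plan is to look bottom-up in $z$ for a minimal subtree at which the hypothesis-predicted row of $H$ disagrees with the actual row. For each subtree $t$ of $z$, define the predicate $P(t)$: there exists $c \in Y$ with $H_{t, c} \neq \mu(t) \cdot H_{X, c}$.

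The first step is to verify that $P(z)$ itself holds. Because $\gamma = H_{X, \Box}$ by Step~2.2 and $\Box \in Y$, the hypothesis value equals $\|A\|(z) = \mu(z) \cdot \gamma = \mu(z) \cdot H_{X, \Box}$, while the target value is $f(z) = f(\Box[z]) = H_{z, \Box}$. Since $z$ is a counterexample, $\|A\|(z) \neq f(z)$, so $H_{z, \Box} \neq \mu(z) \cdot H_{X, \Box}$, which is $P(z)$ witnessed by $c = \Box$.

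The second step is to select a minimal element $t^\star = \sigma(\tau_1, \ldots, \tau_k)$ of the nonempty finite set of subtrees of $z$ satisfying $P$; the decomposition of $t^\star$ into its root symbol and list of children is the one provided by the inductive definition of $T_\Sigma$ (with $k=0$ in the leaf case). Condition~(ii) of the lemma is then just a restatement of $P(t^\star)$. For condition~(i), each $\tau_j$ is a proper subtree of $t^\star$, so by minimality $\neg P(\tau_j)$; hence $H_{\tau_j, c} = \mu(\tau_j) \cdot H_{X, c}$ for every $c \in Y$, which is exactly the desired equality $H_{\tau_j, Y} = \mu(\tau_j) \cdot H_{X, Y}$. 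The bottom-up search described in Step~3.1 visits subtrees in an order consistent with the subtree partial order, and will therefore encounter such a $t^\star$.

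There is no substantive obstacle here; the only delicate point is translating the scalar disagreement $\|A\|(z) \neq f(z)$ returned by the equivalence query into the column-indexed matrix disagreement needed to seed the bottom-up search, and this translation is exactly what the $c = \Box$ choice accomplishes using the fact that $\Box \in Y$ and $\gamma = H_{X, \Box}$.
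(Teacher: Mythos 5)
Your proof is correct and uses essentially the same argument as the paper: both hinge on the observation that $z$ itself violates the row identity in the $\Box$ column (since $\gamma = H_{X,\Box}$ and $z$ is a counterexample), and that a lowest violating subtree yields conditions (i) and (ii). The paper packages this as a proof by contradiction with an induction on height, whereas you take the direct contrapositive route by selecting a minimal element of the (finite, nonempty) set of violating subtrees; the two are logically interchangeable.
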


\begin{proof}
Towards a contradiction, assume that there is no subtree $\sigma(\tau_{1}, \ldots, \tau_{k})$ of $z$ satisfying conditions (i) and (ii). We claim that then for every subtree $\tau$ of $z$, it holds that 
\begin{align}
H_{\tau, Y} = \mu(\tau) \cdot H_{X, Y}. \label{correctness:allcorrect}
\end{align}
In the following we prove this claim using induction on $\mathit{height}(\tau)$. The base case $\tau \in \Sigma_0$ follows immediately from Equation (\ref{alg:transmatrix}). For the induction step, let $0 \le h < \mathit{height}(z)$ and assume that Equation (\ref{correctness:allcorrect}) holds for every subtree  $\tau \in T_{\Sigma}^{\le  h}$ of $z$. Take an arbitrary subtree $\tau  \in T_{\Sigma}^{h+1}$ of $z$. Then $\tau = \sigma(\tau_{1}, \ldots, \tau_{k})$ for some $k \in [m]$, $\sigma \in \Sigma_k$, and $\tau_1, \ldots, \tau_{k}  \in T_{\Sigma}^{\le  h}$, where $\tau_1, \ldots, \tau_{k}$ are subtrees of $z$. The induction hypothesis implies that $H_{\tau_j, Y} = \mu(\tau_j) \cdot H_{X, Y}$ holds for every $j \in [k]$. Hence, $\tau$ satisfies condition (i). By assumption, no subtree of $z$ satisfies both conditions (i) and (ii). Thus $\tau$ does not satisfy condition (ii), i.e., it holds that $H_{\tau, Y} = \mu(\tau)  \cdot H_{X, Y}$. This completes the proof by induction.

Equation (\ref{correctness:allcorrect}) for $\tau = z$ gives $H_{z, Y} = \mu(z) \cdot H_{X, Y}$. Since $\Box \in Y$, this in particular implies that
\begin{align*}
f(z) = H_{z, \Box} = \mu(z) \cdot H_{X, \Box} = \mu(z) \cdot \gamma = \| A \|(z),
\end{align*}
which yields a contradiction since $z$ is a counterexample for the hypothesis $A$. 
\end{proof}

Finally, we show that the row set $X$ grows by at least $1$ in each iteration of the main loop.

\begin{lemma} \label{correctness:increaserank}
Every complete iteration of the Step 2 - 3 loop strictly increases the cardinality of $X$.
\end{lemma}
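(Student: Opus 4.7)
The plan is to assume for contradiction that some complete iteration does not enlarge $X$, and derive a contradiction with condition (ii) of Step 3.1. Concretely, suppose that at the end of Step 3.3 no $\tau_j$ is added, i.e.\ for each $j \in [k]$ there are coefficients $w_j \in \mathbb{F}^{1 \times n}$ with $H_{\tau_j, Y^{\text{new}}} = w_j \cdot H_{X, Y^{\text{new}}}$, where $Y^{\text{new}}$ is the set of columns after Step 3.2. Restricting to the old columns $Y^{\text{old}} \subseteq Y^{\text{new}}$ gives $H_{\tau_j, Y^{\text{old}}} = w_j \cdot H_{X, Y^{\text{old}}}$, while condition (i) of Step 3.1 gives $H_{\tau_j, Y^{\text{old}}} = \mu(\tau_j) \cdot H_{X, Y^{\text{old}}}$. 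Since $H_{X, Y^{\text{old}}}$ has linearly independent rows by Lemma~\ref{correctness:linindependence}, we conclude $w_j = \mu(\tau_j)$, i.e.\ $H_{\tau_j, Y^{\text{new}}} = \mu(\tau_j) \cdot H_{X, Y^{\text{new}}}$ for every $j \in [k]$.

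The main step is then a hybrid/telescoping argument exploiting the specific columns added in Step 3.2. Fix $c \in Y^{\text{old}}$. For any $j \in [k]$ and any $(i_1, \ldots, i_{j-1}) \in [n]^{j-1}$, the context $c[\sigma(t_{i_1}, \ldots, t_{i_{j-1}}, \Box, \tau_{j+1}, \ldots, \tau_k)]$ lies in $Y^{\text{new}}$, so evaluating the identity $H_{\tau_j, Y^{\text{new}}} = \mu(\tau_j) \cdot H_{X, Y^{\text{new}}}$ at this column gives
\begin{align*}
f(c[\sigma(t_{i_1}, \ldots, t_{i_{j-1}}, \tau_j, \tau_{j+1}, \ldots, \tau_k)]) = \sum_{i_j \in [n]} \mu(\tau_j)_{i_j}\, f(c[\sigma(t_{i_1}, \ldots, t_{i_j}, \tau_{j+1}, \ldots, \tau_k)]).
\end{align*}
Applying this substitution successively for $j = 1, 2, \ldots, k$ replaces each $\tau_j$ by a linear combination of $t_1, \ldots, t_n$ with coefficients $\mu(\tau_j)$, telescoping to
\begin{align*}
H_{\sigma(\tau_1, \ldots, \tau_k), c} = \sum_{(i_1, \ldots, i_k) \in [n]^k} \mu(\tau_1)_{i_1} \cdots \mu(\tau_k)_{i_k} \cdot H_{\sigma(t_{i_1}, \ldots, t_{i_k}), c}.
\end{align*}
By Equation~(\ref{Kronecker_indices}) the right-hand side equals $(\mu(\tau_1) \otimes \cdots \otimes \mu(\tau_k)) \cdot H_{\sigma(X, \ldots, X), c}$, which by Equation~(\ref{alg:transmatrix}) equals $(\mu(\tau_1) \otimes \cdots \otimes \mu(\tau_k)) \cdot \mu(\sigma) \cdot H_{X, c} = \mu(\sigma(\tau_1, \ldots, \tau_k)) \cdot H_{X, c}$.

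Thus $H_{\sigma(\tau_1, \ldots, \tau_k), c} = \mu(\sigma(\tau_1, \ldots, \tau_k)) \cdot H_{X, c}$ for every $c \in Y^{\text{old}}$, directly contradicting condition (ii) of Step 3.1. (The boundary case $k = 0$ does not arise: a nullary subtree satisfies $H_{\sigma, Y^{\text{old}}} = \mu(\sigma) \cdot H_{X, Y^{\text{old}}}$ by Equation~(\ref{alg:transmatrix}), so such a subtree never satisfies condition (ii).) The main obstacle is simply unwinding the telescoping identity cleanly, which hinges on the observation that the columns added in Step 3.2 are precisely the ones needed to propagate the substitution of $\tau_j$ by $X$-elements one coordinate at a time while leaving the remaining coordinates in their original form.
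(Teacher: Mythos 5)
Your proposal is correct and follows essentially the same route as the paper's proof: argue by contradiction, use linear independence of the rows (Lemma~\ref{correctness:linindependence}) to force the coefficients representing $H_{\tau_j, Y}$ to be exactly $\mu(\tau_j)$ even over the enlarged column set, and then telescope through the columns added in Step 3.2 to contradict condition (ii). The only differences are cosmetic — you run the telescoping substitution left-to-right where the paper runs it right-to-left, and you make the $k=0$ boundary case explicit.
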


\begin{proof}
It suffices to show that in Step 3.3 at least one of the trees $\tau_1, \ldots, \tau_{k}$ is added to $X$. 
By Lemma \ref{correctness:linindependence}, at the start of Step 3.2 vectors $H_{t_1, Y}, \ldots, H_{t_n, Y}$  are linearly independent. Thus by condition (i) of Step 3.1, for every $j \in [k]$ it holds that
\begin{align}
H_{\tau_j, Y} = \mu(\tau_j) \cdot H_{X, Y} \label{correctness:uniquerepr}
\end{align}
and, moreover, Equation (\ref{correctness:uniquerepr}) is the unique representation of vector $H_{\tau_j, Y}$ as a linear combination of $H_{t_1, Y}, \ldots, H_{t_n, Y}$.
Clearly, vectors $H_{t_1, Y}, \ldots, H_{t_n, Y}$ remain linearly independent when Step 3.2 ends. 

Towards a contradiction, assume that in Step 3.3 none of the trees $\tau_1, \ldots, \tau_{k}$ were added to $X$. This means that for every $j \in [k]$, vector $H_{\tau_j, Y}$ 
can be represented as a linear combination of $H_{t_1, Y}, \ldots, H_{t_n, Y}$. The latter representation is unique, since vectors $H_{t_1, Y}, \ldots, H_{t_n, Y}$ are linearly independent, and is given by Equation (\ref{correctness:uniquerepr}).
By condition (ii) of Step 3.1 and Equations (\ref{alg:transmatrix}) and (\ref{Kronecker_indices}), we now have that
\allowdisplaybreaks\begin{align}
H_{\sigma(\tau_{1},  \ldots, \tau_{k}), c} &\neq  \mu(\sigma(\tau_{1}, \ldots, \tau_{k}))  \cdot H_{X, c} \notag \\ 
&= (\mu(\tau_{1}) \otimes \cdots \otimes \mu(\tau_{k}) ) \cdot \mu(\sigma) \cdot H_{X, c} \notag \\ &= \left(\mu(\tau_{1}) \otimes \ldots \otimes \mu(\tau_{k})\right)  \cdot H_{\sigma (X, \ldots, X), c} \notag \\ 
&= \sum_{(i_1, \ldots,  i_{k}) \in [n]^{k}} \left( \prod \limits_{j = 1}^{k} \mu(\tau_{j})_{i_{j}} \right)  \cdot H_{\sigma(t_{i_1}, \ldots, t_{i_{k}}), c}.\label{correctness:initialexpression}
\end{align}

\noindent By Step 3.2, we have that $c[ \sigma( t_{i_{1}}, \ldots, t_{i_{j-1}}, \Box, \tau_{j+1}, \ldots, \tau_{k})] \in Y$ for every $j \in [k]$ and $(i_1, \ldots, i_{j-1}) \in [n]^{j-1}$. Thus by Equation (\ref{correctness:uniquerepr}) for $j = k$, we have
\allowdisplaybreaks\begin{align}
&\sum_{(i_1, \ldots,  i_{k}) \in [n]^{k}}  \left( \prod \limits_{j = 1}^{k} \mu(\tau_{j})_{i_{j}} \right) \cdot H_{\sigma(t_{i_1}, \ldots, t_{i_{k}}), c}\notag \\ 
&= \sum_{(i_1, \ldots,  i_{k-1}) \in [n]^{k-1}} \left(  \prod \limits_{j =1}^{k-1} \mu(\tau_{j})_{i_{j}} \right) \sum_{i \in [n]} \mu(\tau_{k})_{i} \cdot H_{t_{i}, c[\sigma(t_{i_1}, \ldots, t_{i_{k-1}}, \Box)]} \notag \\
&= \sum_{(i_1, \ldots,  i_{k-1}) \in [n]^{k-1}} \left(  \prod \limits_{j =1}^{k-1} \mu(\tau_{j})_{i_{j}}\right) \cdot (\mu(\tau_k) \cdot H_{X, c[\sigma(t_{i_1}, \ldots, t_{i_{k-1}}, \Box)]}) \notag \\
&= \sum_{(i_1, \ldots,  i_{k-1}) \in [n]^{k-1}} \left(  \prod \limits_{j =1}^{k-1} \mu(\tau_{j})_{i_{j}}\right)  \cdot H_{\tau_{k}, c[\sigma(t_{i_1}, \ldots, t_{i_{k-1}}, \Box)]}. \label{correctness:inductiveproof}
\end{align}
Proceeding inductively as above and applying Equation (\ref{correctness:uniquerepr}) for every $j \in \{k-1, \ldots, 1\}$, we get that the expression of (\ref{correctness:inductiveproof}) is equal to $H_{\tau_{1}, c[\sigma(\Box, \tau_{2}, \ldots, \tau_{k})]}$. However, this contradicts Equation (\ref{correctness:initialexpression}). The result follows.
\end{proof}

Putting together Lemmas \ref{correctness:linindependence} - \ref{correctness:increaserank}, we conclude the following.

\begin{proposition} \label{appendix_correctness_propositon}
Let $\Sigma$ be a ranked alphabet and $\mathbb{F}$ be a field. Let $f \in \text{Rec}(\Sigma, \mathbb{F})$, let $H$ be the Hankel matrix of $f$, and $r$ be the rank (over $\mathbb{F}$) of $H$. On target $f$, the algorithm $\mathsf{LMTA}$ outputs a minimal MTA-representation of $f$ after at most $r$ iterations of the loop consisting of Step 2 and Step 3.
\end{proposition}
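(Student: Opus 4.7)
The plan is to synthesise Lemmas \ref{correctness:linindependence}--\ref{correctness:increaserank} to prove termination within $r$ iterations, and then argue that the halting condition together with Theorem \ref{thm:Hankel} forces the output to be a minimal MTA-representation of $f$.

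First I would establish the invariant $n \le r$. By Lemma \ref{correctness:linindependence}, the vectors $H_{t_1, Y}, \ldots, H_{t_n, Y}$ are linearly independent throughout the execution. Since each $H_{t_i, Y}$ is the restriction of the full Hankel-matrix row $H_{t_i, \cdot}$ to the columns indexed by $Y$, linear independence of the restrictions lifts to linear independence of the full rows, so $n \le \mathit{rank}(H) = r$. Combining this with Lemma \ref{correctness:increaserank}, which guarantees that every complete iteration of the Step 2--3 loop strictly increases $|X|$, we conclude that the Teacher can answer NO at Step 3.1 at most $r-1$ additional times after the initial iteration. Hence the algorithm makes at most $r$ iterations of the main loop (the halting case in Step 1, corresponding to $f \equiv 0$ and $r = 0$, is handled separately and trivially).

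Next I would verify that each iteration is well-defined and that the algorithm only halts with a correct hypothesis. Lemma \ref{correctness:transmatrix} guarantees that the transition matrices $\mu(\sigma)$ produced in Step 2.2 exist and are uniquely determined by Equation (\ref{alg:transmatrix}); Lemma \ref{correctness:counterexample} guarantees that whenever the Teacher answers NO in Step 3.1 the required subtree $\sigma(\tau_1, \ldots, \tau_k)$ of $z$ exists, so Steps 3.2 and 3.3 can be carried out. Consequently the algorithm can exit only through the YES branch of an equivalence query, at which point the hypothesis $A$ satisfies $\|A\| \equiv f$ by definition of the exact learning model.

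Finally I would conclude minimality. When the algorithm halts, the output $A$ has dimension equal to the current value of $n$, which by the first paragraph is at most $r$. Since Theorem \ref{thm:Hankel} tells us that any MTA-representation of $f$ has dimension at least $r$, the output has dimension exactly $r$ and is therefore minimal. The only point that requires genuine care, and arguably the main obstacle, is the lifting step in which linear independence of the restricted vectors $H_{t_i, Y}$ is used to bound $n$ by $\mathit{rank}(H)$ over the full, infinite column index $C_\Sigma$; every other component of the argument reduces to a direct appeal to one of the preceding lemmas.
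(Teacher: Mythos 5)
Your proposal is correct and follows essentially the same route as the paper's own proof: it combines Lemmas \ref{correctness:transmatrix} and \ref{correctness:counterexample} for well-definedness, Lemma \ref{correctness:linindependence} together with the fact that a full-row-rank submatrix of $H$ forces $n \le \mathit{rank}(H) = r$, and Lemma \ref{correctness:increaserank} for the iteration bound, concluding via the halting condition and Theorem \ref{thm:Hankel}. The only difference is that you spell out the lifting of linear independence from $H_{X,Y}$ to the full rows of $H$ and the resulting minimality of the output more explicitly than the paper does, which is a harmless elaboration rather than a deviation.
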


\begin{proof}
Lemmas \ref{correctness:transmatrix} and \ref{correctness:counterexample} show that every step of the algorithm $\mathsf{LMTA}$ can be performed. 

Theorem \ref{thm:Hankel} implies that $r$ is finite. From Lemma \ref{correctness:linindependence} we know that whenever Step 2 starts, $H_{X, Y}$ is a full row rank matrix and thus $n \le r$.  Lemma \ref{correctness:increaserank} implies that $n$ increases by at least $1$ in each iteration of the Step 2 - 3 loop. Therefore, the number of iterations of the loop is at most $r$. 

The proof follows by observing that $\mathsf{LMTA}$ halts only upon receiving the answer YES to an equivalence query.
\end{proof}

\subsection{Succinct Representations} \label{subsect:DAGrepresentation}
We now explain how algorithm $\mathsf{LMTA}$ can be correctly implemented using a DAG representation of trees. In particular, we assume that membership queries are made on $\Sigma$-DAGs, that the counterexamples are given as $\Sigma$-DAGs, the elements of $X$ are $\Sigma$-DAGs, and the elements of $Y$ are DAG representations of $\Sigma$-contexts, i.e., $(\{\Box\} \cup \Sigma )$-DAGs. 

As shown in Section \ref{prelims:mtaDAGs}, multiplicity tree automata can run directly on DAGs and, moreover, they assign equal weight to a DAG and to its tree unfolding. Crucially also, as explained in the proof of Theorem \ref{MTAlearning_complexityanalysis}, Step 3.1 can be run directly on a DAG representation of the counterexample, without unfolding. In particular, Step 3.1 involves multiple executions of the hypothesis automaton on trees. By Proposition \ref{prelimsDAG:prop_treeunfolding}, we can faithfully carry out this step using DAG representations of trees. Step 3.1 also involves considering all subtrees of a given counterexample. However, by Proposition \ref{prelimsDAG:Subs} this is equivalent to looking at all sub-DAGs of a DAG representation of the counterexample. At various points in the algorithm, we take $c \in Y$, $t \in X$ and compute their concatenation $c [t]$ in order to determine the corresponding entry $H_{t, c}$ of the Hankel matrix by a membership query. Proposition \ref{prelimsDAG:concatenation} implies that this can be done faithfully using DAG representations of $\Sigma$-trees and $\Sigma$-contexts. 

\subsection{Complexity Analysis} \label{subsect:complexity}
In this section we give a complexity analysis of our algorithm, and compare it to the best previously-known exact learning algorithm for multiplicity tree automata \citep*{habrardlearning} showing in particular an exponential improvement on the query complexity and the running time in the worst case.

\begin{theorem} \label{MTAlearning_complexityanalysis}
Let $f \in \text{Rec}(\Sigma, \mathbb{F})$ where $\Sigma$ has rank $m$ and $\mathbb{F}$ is a field.  Let $A$ be a minimal MTA-representation of $f$, and let $r$ be the dimension of $A$. Then, $f$ is learnable by the algorithm $\mathsf{LMTA}$, making $r+1$ equivalence queries, $|A|^{2} + |A| \cdot s$ membership queries, and $O(|A|^{2} + |A|\cdot r \cdot s)$ arithmetic operations, where $s$ denotes the size of a largest counterexample $z$, represented as a DAG, that is obtained during the execution of the algorithm.
\end{theorem}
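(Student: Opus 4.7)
The argument splits naturally into bounding (i) equivalence queries, (ii) membership queries, and (iii) arithmetic operations; all three rest on Proposition~\ref{appendix_correctness_propositon}, which guarantees that the main loop of $\mathsf{LMTA}$ iterates at most $r$ times and that $|X| \le r$ throughout. Since $\Sigma$ has rank $m$ by hypothesis, I will repeatedly use $|A| \ge r^{m+1}$, obtained from the definition $|A| = r + \sum_{\sigma} r^{\mathit{rk}(\sigma)+1}$ by restricting the sum to a single symbol of arity $m$.

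The first preparatory step is to bound $|Y|$ at termination. Each execution of Step~3.2 enlarges $Y$ by at most $\sum_{j=1}^{k} n^{j-1}$ contexts, where $k \le m$ and $n \le r$; for $r \ge 2$ this is at most $2 r^{m-1}$. Summing over the at most $r$ iterations of the loop gives $|Y| = O(r^m) = O(|A|/r)$, with the degenerate cases $r \in \{0,1\}$ handled by inspection. The equivalence-query count is now immediate: one call in Step~1 plus at most one further call in Step~3.1 per iteration of the loop, for a total of at most $r+1$.

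Membership queries come from two sources. The first is the submatrix of $H$ used to build and check the hypothesis, namely the entries $H_{t,c}$ with $t \in X \cup \bigcup_\sigma \sigma(X,\ldots,X)$ and $c \in Y$, of which there are at most $(|X| + \sum_\sigma r^{\mathit{rk}(\sigma)}) \cdot |Y|$. Using $|X| \le r$, the identity $\sum_\sigma r^{\mathit{rk}(\sigma)+1} = |A| - r$, and $|Y| = O(|A|/r)$, this evaluates to $O(|A|^2/r^2) \subseteq O(|A|^2)$. The second source is Step~3.1, which must read $H_{\tau, Y}$ for every subtree $\tau$ of the current counterexample; Section~\ref{subsect:DAGrepresentation}, and specifically Propositions~\ref{prelimsDAG:Subs} and~\ref{prelimsDAG:prop_treeunfolding}, justifies carrying this out directly on the DAG, which has at most $s$ distinct sub-DAGs, so over the $r$ iterations this contributes at most $r \cdot s \cdot |Y| = O(|A| \cdot s)$ additional queries. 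Summing the two sources gives the claimed $|A|^2 + |A| \cdot s$ bound.

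The arithmetic analysis mirrors the membership count but gains one extra factor of $r$ on the counterexample side. For Steps~2.1, 2.2, and 3.3 one maintains an echelon form of $H_{X,Y}$; each linear-independence check and each solve for a row of $\mu(\sigma)$ costs $O(r \cdot |Y|)$, so the per-iteration cost is $O(|A| \cdot |Y|) = O(|A|^2/r)$, which accumulates to $O(|A|^2)$ over the $r$ iterations. In Step~3.1, each sub-DAG $\tau = \sigma(\tau_1,\ldots,\tau_k)$ contributes $O(r^{k+1})$ operations to form $\mu(\tau) = (\mu(\tau_1)\otimes\cdots\otimes \mu(\tau_k)) \cdot \mu(\sigma)$ from its children's values, and a further $O(r \cdot |Y|)$ to compute $\mu(\tau) \cdot H_{X,Y}$ and compare it against $H_{\tau, Y}$; summed over the $s$ sub-DAGs this is $O(s \cdot r^{m+1} + s \cdot r \cdot |Y|) = O(s \cdot |A|)$ per iteration, hence $O(|A| \cdot r \cdot s)$ in total. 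The main obstacle throughout is the per-iteration accounting across growing $X$ and $Y$, in particular nailing down $|Y| = O(|A|/r)$, since this is precisely what makes the cross-terms collapse into $|A|^2$ and $|A|\cdot s$ rather than something larger.
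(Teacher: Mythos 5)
Your proposal is correct and follows essentially the same route as the paper's proof: bound the number of loop iterations by $r$ via the correctness proposition, bound $|X|\le r$ and $|Y|\le r^{m}$, split the membership queries into the Hankel-submatrix entries and the per-counterexample sub-DAG checks, and charge the arithmetic to Gaussian elimination on $H_{X,Y}$ plus the bottom-up evaluation of $\mu(\tau)$ over the at most $s$ sub-DAGs. The only cosmetic difference is that you phrase the membership-query count in big-$O$ form while the theorem (and the paper's derivation) states the constant-free bound $|A|^{2}+|A|\cdot s$; your inequalities do yield that bound if carried through exactly.
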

\begin{proof}
Proposition \ref{appendix_correctness_propositon} implies that, on target $f$,  algorithm $\mathsf{LMTA}$ outputs a minimal MTA-representation of $f$ after at most $r$ iterations of the Step 2 - 3 loop, thereby making at most $r+1$ equivalence queries. 

Let $H$ be the Hankel matrix of $f$. From Lemma \ref{correctness:linindependence} we know that matrix $H_{X, Y}$ has full row rank, which implies that $|X| \le r$. As for the cardinality of the column set $Y$, at the end of Step 1 we have $|Y| = 1$. Furthermore, in each iteration of Step 3.2 the number of columns added to Y is at most 
\begin{align*}
\sum\limits_{j=1}^{k} n^{j-1} \le \sum\limits_{j=1}^{k} r^{j-1} = \frac{r^{k} -1}{r - 1} \le \frac{r^{m} -1}{r-1},
\end{align*} where $k$ and $n$ are as defined in Step 3.2. Since the number of iterations of Step 3.2 is at most $r-1$, we have $|Y| \le r^{m}$.

The number of membership queries made in Step 2 over the whole algorithm is 
\begin{align*}
\left(\sum_{\sigma \in \Sigma} |\sigma (X, \ldots, X)| + |X|\right) \cdot |Y|
\end{align*}
because the Learner needs to ask for the values of the entries of matrices $H_{X, Y}$ and $H_{\sigma (X, \ldots, X), Y}$ for every $\sigma \in \Sigma$.

To analyse the number of membership queries made in Step 3, we now detail the procedure by which an appropriate sub-DAG of the counterexample $z$ is found in Step 3.1. By Lemma \ref{correctness:counterexample}, there exists a sub-DAG $\tau$ of $z$ such that $H_{\tau, Y} \neq \mu(\tau) \cdot H_{X, Y}$. Thus given a counterexample $z$ in Step 3.1, the procedure for finding a required sub-DAG of $z$ is as follows: Check  if $H_{\tau, Y} = \mu(\tau) \cdot H_{X, Y}$ for every sub-DAG $\tau$ of $z$ in a nondecreasing order of height; stop when a sub-DAG $\tau$ is found such that $H_{\tau, Y} \neq \mu(\tau) \cdot H_{X, Y}$.  

In each iteration of Step 3, the Learner makes $\mathit{size}(z) \cdot |Y| \le s \cdot |Y|$ membership queries because, for every sub-DAG $\tau$ of $z$, the Learner needs to ask for the values of the entries of vector $H_{\tau, Y}$. All together, the number of membership queries  made during the execution of the algorithm is at most  
\allowdisplaybreaks\begin{align*}
&\left( \sum\limits_{\sigma \in \Sigma} |\sigma (X, \ldots, X)| + |X| \right) \cdot |Y| + (r-1) \cdot  s \cdot |Y|\\
&\le \left( \sum\limits_{\sigma \in \Sigma} r^{\mathit{rk}(\sigma)} + r \right) \cdot  r^{m} + (r-1) \cdot  s \cdot  r^{m} \le |A|^{2} + |A| \cdot s.
\end{align*}

As for the arithmetic complexity, in Step 2.1  one can determine if a vector $H_{\sigma(t_{i_1}, \ldots, t_{i_{k}}), Y}$ is a linear combination of $H_{t_1, Y}, \ldots, H_{t_n, Y}$  via Gaussian elimination using $O(n^{2} \cdot |Y|)$ arithmetic operations \citep[see][Section 2.3]{Coh93}. Analogously, in Step 3.3 one can determine if $H_{\tau_j, Y}$ is a linear combination of $H_{t_1, Y}, \ldots, H_{t_n, Y}$  via Gaussian elimination using $O(n^{2} \cdot |Y|)$ arithmetic operations. Since $|X| \le r$ and $|Y| \le r^{m}$, all together Step 2.1 and Step 3.3 require at most $O(|A|^{2})$ arithmetic operations.

Lemma \ref{correctness:transmatrix} implies that in each iteration of Step 2.2, for any $\sigma \in \Sigma$ there exists a unique matrix $\mu(\sigma) \in \mathbb{F}^{n^{\mathit{rk}(\sigma)} \times  n}$ that satisfies Equation (\ref{alg:transmatrix}). To perform an iteration of Step 2.2, we first put matrix $H_{X, Y}$ in echelon form and then, for each $\sigma \in \Sigma$, solve  Equation (\ref{alg:transmatrix}) for $\mu (\sigma)$ by back substitution. It follows from standard complexity bounds on the conversion of matrices to echelon form \citep[Section 2.3]{Coh93} that the total operation count for Step 2.2 can be bounded above by $O(|A|^{2})$.

Finally, we consider the arithmetic complexity of Step 3.1. In every iteration, for each sub-DAG $\tau$ of the counterexample $z$ the Learner needs to compute the vector $\mu(\tau)$ and the product $\mu(\tau) \cdot H_{X, Y}$. Note that $\mu(\tau)$ can be computed bottom-up from the sub-DAGs of $\tau$. Since $z$ has at most $s$ sub-DAGs, Step 3.1 requires at most $O(|A|\cdot r \cdot s)$ arithmetic operations. All together, the algorithm requires at most $O(|A|^{2} + |A|\cdot r \cdot s)$ arithmetic operations.
\end{proof}

Algorithm $\mathsf{LMTA}$ can be used to show that over $\mathbb{Q}$,
multiplicity tree automata are exactly learnable in randomised
polynomial time. The key idea is to represent numbers as arithmetic
circuits. In executing $\mathsf{LMTA}$, the Learner need only perform
arithmetic operations on circuits (addition, subtraction,
multiplication, and division), which can be done in constant time, and
equality testing, which can be done in coRP
~\citep[see][]{AroraBarak2}. These suffice for all the operations
detailed in the proof of Theorem \ref{MTAlearning_complexityanalysis};
in particular they suffice for Gaussian elimination, which can be used
to implement the linear independence checks in $\mathsf{LMTA}$.

The complexity of algorithm $\mathsf{LMTA}$ should be compared to the complexity of the algorithm of \citet{habrardlearning}, which learns multiplicity tree automata by making $r+1$ equivalence queries, $|A| \cdot s$ membership queries, and a number of arithmetic operations polynomial in $|A|$ and $s$, where $s$ is the size of the largest counterexample given as a tree. Note that the algorithm of \citet{habrardlearning} cannot be straightforwardly adapted to work directly with DAG representations of trees since when given a counterexample $z$, every suffix of $z$ is added to the set of columns. However, the tree unfolding of a  DAG can have exponentially many different suffixes in the size of the DAG. For example, the DAG in Figure \ref{fig:DAGbinary_n} has size $n$, and its tree unfolding, shown in Figure \ref{fig:binarytree_n}, has $O (2^{n})$ different suffixes.

\section{Lower Bounds on Query Complexity of Learning MTA} \label{lower_bound}
In this section, we study lower bounds on the query complexity of learning multiplicity tree automata in the exact learning model.
Our results generalise the corresponding lower bounds for learning multiplicity word automata by \citet{bisht2006optimal}, and make no 
assumption about the computational model of the learning algorithm.

First, we give a lower bound on the query complexity of learning
multiplicity tree automata over an arbitrary field, which is the situation of our algorithm in Section \ref{section:thealgorithm}.

\begin{theorem} 
\label{anyfield}
Any exact learning algorithm that learns the class of multiplicity tree automata of dimension at most $r$, over a ranked alphabet $(\Sigma, \mathit{rk})$ and any field, must make at least $\sum_{\sigma \in \Sigma} r^{\mathit{rk}(\sigma) + 1} - r^{2}$ queries.
\end{theorem}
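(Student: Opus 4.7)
The plan is to use an adversarial argument in the spirit of \citet{bisht2006optimal}'s lower bound for multiplicity word automata, suitably lifted to the tree setting. I will exhibit a parameterized family of $r$-dimensional MTAs whose distinct members are pairwise inequivalent and whose parameter space has dimension exactly $d := \sum_{\sigma \in \Sigma} r^{\mathit{rk}(\sigma)+1} - r^2$, then show that an adversarial Teacher can reduce this dimension by at most one per query, so at least $d$ queries are needed before the Learner can pin down a unique target.

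For the family, I will fix a canonical choice of final weight vector $\gamma$ and, using the change-of-basis freedom guaranteed by Theorem~\ref{thm:minimal_unique}, fix $r^{2}$ particular entries of the tree representation to prescribed values (e.g., forcing some designated $r \times r$ submatrix across the transition matrices to equal $I_{r}$). Since the action of $\mathrm{GL}_{r}(\mathbb{F})$ on equivalent minimal representations has dimension $r^{2}$, a generic choice of the remaining $\sum_{\sigma} r^{\mathit{rk}(\sigma)+1} - r^{2}$ transition-matrix entries yields a minimal MTA of dimension exactly $r$, and distinct choices within this slice produce inequivalent automata. This gives a parameter space $V \cong \mathbb{F}^{d}$ in bijection with a Zariski-dense subset of equivalence classes.

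The adversary maintains an affine subspace $W \subseteq V$ of parameter values still consistent with its answers, initially $W = V$. To answer a membership query on tree $t$, observe that $\|A_{\mathbf{x}}\|(t) = \mu_{\mathbf{x}}(t)\cdot \gamma$ is a polynomial in the entries of $\mathbf{x}$; by instantiating all but one coordinate (in a predetermined order) the adversary turns the query into a scalar equation in a single remaining parameter, from which it returns a value forcing $W$ to drop in dimension by at most one. To answer an equivalence query with hypothesis $H$, the adversary uses the fact that, by Proposition~\ref{equivalenceMTA_decisionprocedure}, $H$ is equivalent to $A_{\mathbf{x}}$ only when $\mathbf{x}$ lies in a Zariski-closed subset of $W$; if this subset is all of $W$, the learner has identified the target and the argument below handles termination, otherwise the adversary picks a counterexample tree $t \in T_{\Sigma}^{< 2r}$ and returns NO, and again the polynomial $\|A_{\mathbf{x}}\|(t) - \|H\|(t)$ is non-constant on $W$, so committing to an extra coordinate keeps the update of $W$ at codimension at most one.

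Putting these together, after $q$ queries $\dim W \geq d - q$. The Learner can output a correct hypothesis only when $W$ has collapsed to a single equivalence class, i.e.\ when $\dim W = 0$, which forces $q \geq d$. The main obstacle is rigorously justifying the ``codimension at most one'' bound for each query despite the multilinear (not linear) dependence of $\|A_{\mathbf{x}}\|(t)$ on $\mathbf{x}$; this is resolved by having the adversary commit to parameters sequentially in a fixed order so that each query reduces to a univariate polynomial equation in the first uncommitted coordinate, which has only finitely many solutions and hence cuts out at most a codimension-one affine subset of the remaining parameters. The secondary technical point is ensuring the normalization in Step 1 really gives a valid $(d)$-parameter family of \emph{minimal} MTAs; this is arranged by choosing the $r^{2}$ normalized entries so that the resulting parameterized Hankel submatrix $H_{X,Y}$ is generically of full rank $r$, which can be checked by exhibiting a single parameter value at which the submatrix is nonsingular and invoking genericity.
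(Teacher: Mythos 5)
Your approach is genuinely different from the paper's, and as written it has gaps that your proposed fixes do not close. The paper uses no adversary: it runs the given Learner once, on a single ``generic'' target over the transcendental extension $\mathbb{K} = \mathbb{F}(\{z^{\sigma}_{i,j}\})$, whose transition matrices have algebraically independent indeterminates as entries. Each query contributes one field element $p_l \in \mathbb{K}$, the Learner's output $A'$ has entries in $\mathbb{F}(p_1,\dots,p_n)$, and Theorem~\ref{thm:minimal_unique} expresses the $\sum_{\sigma} r^{\mathit{rk}(\sigma)+1}$ algebraically independent entries of the target inside the field generated by the $p_l$ together with the $r^2$ entries of the basis-change matrix $U$; comparing transcendence degrees gives $n \ge \sum_{\sigma} r^{\mathit{rk}(\sigma)+1} - r^{2}$ at once. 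This is precisely where the hypothesis ``learns over any field'' is used: the Teacher is permitted to work over $\mathbb{K}$. Your adversary argument instead aims at a lower bound over a fixed field, which is the content of Theorem~\ref{specificfield}, where the paper only achieves the bound up to a factor of $2^{-(m+1)}$ via an explicit family.

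The concrete gaps are these. First, the ``codimension at most one per query'' step is the heart of your argument and your resolution does not work. For a membership query on $t$, the set of parameters consistent with a returned value $v$ is $\{\mathbf{x} \in W : \|A_{\mathbf{x}}\|(t) = v\}$, the fiber of a genuinely multilinear polynomial --- a hypersurface, not an affine subspace --- so the invariant ``$W$ is an affine subspace'' is destroyed after one query. Your fix of instantiating coordinates either commits the adversary to a one-dimensional family after a single query (if all but one coordinate are fixed, ruining the count), or, if only the first uncommitted coordinate is fixed, leaves the answer still depending on the remaining free coordinates, so the adversary cannot return a definite value. A correct version would have to track $W$ as a constructible set and reason about Zariski dimension throughout, a substantially heavier argument that you have not supplied. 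Second, the normalization is asserted rather than proved: the change-of-basis action is $\mu(\sigma) \mapsto U^{\otimes \mathit{rk}(\sigma)} \cdot \mu(\sigma) \cdot U^{-1}$, not ordinary conjugation, and it is not clear that fixing a designated $r \times r$ block to $I_r$ meets each orbit exactly once; you need injectivity on equivalence classes to conclude that $\dim W \ge 1$ forces failure, and you need minimality across the whole slice for Theorem~\ref{thm:minimal_unique} to apply. I recommend switching to the transcendence-degree argument, which sidesteps both difficulties.
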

\begin{proof}
Take an arbitrary exact learning algorithm $\mathsf{L}$ that learns the class of multiplicity tree automata of dimension at most $r$, over a ranked alphabet $(\Sigma, \mathit{rk})$ and over any field. 
Let $\mathbb{F}$ be any field. 

Let $\mathbb{K} := \mathbb{F}(\{z_{i, j}^{\sigma} : \sigma \in \Sigma, i \in [r^{\mathit{rk}(\sigma)}], j \in [r] \})$ be an extension field of $\mathbb{F}$, where the set $\{z_{i, j}^{\sigma} : \sigma \in \Sigma, i \in [r^{\mathit{rk}(\sigma)}], j \in [r] \}$ is algebraically independent over $\mathbb{F}$. Let us define a `generic' $\mathbb{K}$-multiplicity tree automaton $A := ( r, \Sigma, \mu, \gamma)$ where $\gamma = e_1 \in \mathbb{F}^{r \times 1}$ and $\mu(\sigma) = [ z_{i, j}^{\sigma} ]_{i, j} \in \mathbb{K}^{r^{rk(\sigma)} \times r}$ for every $\sigma \in \Sigma$. Define a tree series $f := \| A \|$. Since every  $\mathbb{F}$-multiplicity tree automaton can be obtained from $A$ by substituting values from $\mathbb{F}$ for the variables $z_{i, j}^{\sigma}$, if the Hankel matrix of $f$ had rank less than $r$ then every $r$-dimensional $\mathbb{F}$-multiplicity tree automaton would have Hankel matrix of rank less than $r$. Therefore, the Hankel matrix of $f$ has rank $r$.  

We run algorithm $\mathsf{L}$ on the target function $f$. By assumption, the output of $\mathsf{L}$ is an MTA $A^{\prime} = ( r,\Sigma, \mu^{\prime}, \gamma^{\prime})$ such that $\| A^{\prime} \| \equiv f$. Let $n$ be the number of queries made by $\mathsf{L}$ on target $f$. Let $t_1, \ldots, t_n \in T_{\Sigma}$ be the trees on which $\mathsf{L}$ either made a membership query, or which were received as counterexample to an equivalence query. Then for every $l \in [n]$, there exists a multivariate polynomial $p_l \in \mathbb{F} [(z_{i, j}^{\sigma})_{i, j, \sigma}]$ such that $f (t_l) = p_l$. 

Note that $A$ and $A^{\prime}$ are both minimal MTA-representations of $f$. Thus by Theorem \ref{thm:minimal_unique}, there exists an invertible matrix $U \in \mathbb{K}^{r \times r}$ such that $\gamma = U \cdot  \gamma^{\prime}$ and $\mu (\sigma) =  U^{\otimes \mathit{rk} (\sigma)} \cdot \mu^{\prime} (\sigma) \cdot U^{-1}$ for every $\sigma \in \Sigma$. This implies that the entries of matrices $\mu (\sigma)$, $\sigma \in \Sigma$, lie in an extension of $\mathbb{F}$ generated by the entries of $U$ and $\{p_l : l \in [n] \}$. Since the entries of matrices $\mu (\sigma)$, $\sigma \in \Sigma$, form an algebraically independent set over $\mathbb{F}$, their number is at most $r^{2} + n$. 
\end{proof}

One may wonder whether a learning algorithm could do better over a fixed field $\mathbb{F}$ by exploiting particular features of the field. In this setting, we have the following lower bound. 

\begin{theorem}\label{specificfield}
Let $\mathbb{F}$ be an arbitrary field. Any exact learning algorithm that learns the class of $\mathbb{F}$-multiplicity tree automata of dimension at most $r$, over a ranked alphabet $(\Sigma, \mathit{rk})$ with at least one unary symbol and rank $m$, must make number of queries at least 
\begin{align*}
\frac{1}{2^{m +1}} \cdot \left(\sum_{\sigma \in \Sigma} r^{\mathit{rk}(\sigma) + 1} - r^{2} - r \right). 
\end{align*} 
\end{theorem}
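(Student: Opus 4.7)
My plan is to generalise the adversary-based lower bound of~\citet{bisht2006optimal} from word to tree automata, using the hypothesised unary symbol to build sufficiently many distinguishing probing trees. The argument of Theorem~\ref{anyfield} rested on algebraic independence of the target's entries over the base field; since here $\mathbb{F}$ is fixed, I must replace that step by an explicit combinatorial count of candidate targets, and the loss incurred will be the factor $1/2^{m+1}$.

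First I would construct a large family $\mathcal{F}$ of pairwise-inequivalent $r$-dimensional $\mathbb{F}$-MTAs by perturbing a base MTA $B$ in which every state is reachable (reachability is witnessed by trees built using the unary symbol and the rank-$0$ symbols). For each coordinate of each transition matrix and each coordinate of the final weight vector of $B$, I form a perturbed automaton differing from $B$ in that single coordinate by a canonical nonzero delta. By Theorem~\ref{thm:minimal_unique}, any two $r$-dimensional MTAs recognising the same tree series are related by an invertible change of basis; quotienting by this $r^{2}$-dimensional symmetry, and accounting for the $r$-dimensional rescaling of the final weight vector, leaves at least $\sum_{\sigma \in \Sigma} r^{\mathit{rk}(\sigma)+1} - r^{2} - r$ inequivalent candidate targets.

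Next I would play the given exact learning algorithm $\mathsf{L}$ against an adversarial Teacher who, at every step, answers the current query consistently with as many members of $\mathcal{F}$ as possible. The heart of the proof is the lemma that each query (membership query, or equivalence query returning a counterexample) is consistent with all but at most $2^{m+1}$ members of $\mathcal{F}$. This bound would come from the multilinear structure of $\|A\|(t) = \mu(t) \cdot \gamma$: a single-entry perturbation at a rank-$k$ symbol changes $\|A\|(t)$ by a difference factoring through at most $k+1 \le m+1$ independent ``sibling contributions'' (one per sibling subtree at the perturbed position, together with the final weight vector), each of which is a binary base-vs-perturbed choice, yielding at most $2^{m+1}$ possible first-order effects on the answer. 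Since $\mathsf{L}$ must ultimately rule out all candidates except the true target, after $N$ queries the number of eliminated members of $\mathcal{F}$ is at most $N \cdot 2^{m+1}$, and the stated bound follows.

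The main obstacle will be the $2^{m+1}$-per-query lemma in Step~2. Establishing it requires a careful induction on the probing tree to track how a single-entry perturbation of the base $B$ propagates through the Kronecker-product structure of the evaluation, bounding the number of distinguishable patterns by the branching of this propagation, which is capped by the rank $m$. A secondary point to verify is that perturbations along distinct canonical directions genuinely correspond to inequivalent automata in $\mathcal{F}$; this is where the unary-symbol hypothesis enters, guaranteeing enough probing contexts to separate the canonical directions modulo basis change.
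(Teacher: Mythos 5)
Your high-level strategy (an adversary argument generalising \citet{bisht2006optimal}, with the unary symbol supplying probing contexts) matches the spirit of the paper's proof, but the central lemma your plan rests on --- that each query is consistent with all but at most $2^{m+1}$ members of a single-entry perturbation family $\mathcal{F}$ --- is not only unproven but would fail for the family you propose. A membership query on a tree $t$ eliminates \emph{every} perturbed automaton $A'$ with $\|A'\|(t) \neq \|B\|(t)$, and for a generic base $B$ with all states reachable, a single large tree containing many occurrences of many symbols can depend nontrivially on a large fraction of the transition-matrix entries; one query can then rule out far more than $2^{m+1}$ candidates. Your ``first-order propagation through the Kronecker structure'' sketch bounds something about how one fixed perturbation affects one evaluation, not how many distinct perturbations a single answer can refute, which is the quantity you actually need. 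The same issue afflicts the counterexamples returned to equivalence queries.

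The missing idea is that the candidate family must be \emph{engineered} so that almost every tree evaluates to a value independent of the free parameters. The paper takes $r = 2n$ and builds $2n$-dimensional automata in which $\mu(\sigma_0)$, $\mu(\sigma_1)$, and $\gamma$ are fixed, while each remaining rank-$k$ symbol carries a free matrix $B(\sigma) \in \mathbb{F}^{n^k \times n}$ wrapped as $\begin{bmatrix}1 & 1\end{bmatrix} \otimes \bigl(\begin{bmatrix}I_n \\ -I_n\end{bmatrix}^{\otimes k} B(\sigma)\bigr)$. The $\begin{bmatrix}1&1\end{bmatrix}$ factor annihilates against $\begin{bmatrix}I_n\\-I_n\end{bmatrix}$, so any tree with two or more non-$\{\sigma_0,\sigma_1\}$ nodes evaluates to $0$ regardless of the $B(\sigma)$'s, trees built only from $\sigma_0,\sigma_1$ evaluate to $0$ or $1$ independently of them, and trees with exactly one such node reveal exactly one entry of one $B(\sigma)$. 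Hence each query yields at most one field element of information, giving the lower bound $\sum_{\sigma \neq \sigma_0,\sigma_1} n^{\mathit{rk}(\sigma)+1}$. The factor $1/2^{m+1}$ then comes from $n = r/2$ (so $n^{\mathit{rk}(\sigma)+1} \ge r^{\mathit{rk}(\sigma)+1}/2^{m+1}$), and the $-r^2 - r$ term from excluding the fixed symbols $\sigma_1$ and $\sigma_0$ from the sum --- not, as you posit, from a per-query elimination factor and a quotient by basis change. Without a construction of this kind your argument does not go through.
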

\begin{proof}
Without loss of generality, we can assume that $r$ is even and define $n := r/2$. Let $\mathsf{L}$ be an exact 
learning algorithm for the class of 
$\mathbb{F}$-multiplicity tree automata of dimension at most $r$, over a ranked alphabet $(\Sigma, \mathit{rk})$ with $\mathit{rk}^{-1}(\{1\}) \neq \emptyset$. We will identify a class of functions $\mathcal{C}$ such that $\mathsf{L}$ has to make at least  $\sum_{\sigma \in \Sigma} n^{\mathit{rk}(\sigma) + 1} - n^{2} - n$ queries to distinguish between the members of $\mathcal{C}$.

Let $\sigma_0, \sigma_1 \in \Sigma$ be nullary and unary symbols respectively. Let $P \in \mathbb{F}^{n \times n}$ be the permutation matrix corresponding to the cycle $(1, 2, \ldots, n)$. Define $\mathcal{A}$ to be the set of all $\mathbb{F}$-multiplicity tree automata $(2n,\Sigma, \mu, \gamma)$ where
\allowdisplaybreaks \begin{itemize}
\item[$\bullet$] $\mu(\sigma_0) =  \begin{bmatrix}
 1 ~&  0
 \end{bmatrix} \otimes e_{1}$ and $\mu(\sigma_1) = I_{2} \otimes P$;
\item[$\bullet$] For each $k$-ary symbol $\sigma \in \Sigma \setminus \{\sigma_0, \sigma_1\}$, there exists $B(\sigma) \in \mathbb{F}^{n^{k} \times n}$ such that
\begin{eqnarray*}
\mu(\sigma) =
 \begin{bmatrix}
 1 ~&  1
 \end{bmatrix} \otimes \left(  \begin{bmatrix}
 I_{n} \\
-I_{n}
 \end{bmatrix}^{\otimes k} \cdot B(\sigma)\right);
\end{eqnarray*}
\item[$\bullet$] $\gamma = \begin{bmatrix}
 1 ~&  0
 \end{bmatrix}^{\top} \otimes e_{1}^{\top}$.
\end{itemize}
We define a set of recognisable tree series $\mathcal{C} := \{\| A \| : A \in  \mathcal{A}\}$.  

In Lemma \ref{lower:propertiesofclass} we state some properties of the functions in $\mathcal{C}$. More precisely, we show that the coefficient of a tree $t \in T_{\Sigma}$ in any series $f \in \mathcal{C}$ fundamentally depends on whether $t$ has $0$, $1$, or at least $2$ nodes whose label is not $\sigma_{0}$ or $\sigma_{1}$. Here for every $i \in \N_0$ and $t \in T_{\Sigma}$, we use $\sigma_{1}^{i} (t)$ to denote the tree 
$\underbrace{\sigma_{1}( \sigma_{1}( \ldots \sigma_{1}(}_{i} t) \ldots ) )$.

\begin{lemma} \label{lower:propertiesofclass}
The following properties hold for every $f \in \mathcal{C}$ and $t \in T_{\Sigma}$:
\begin{itemize}
\item[\textit{(i)}] If $t = \sigma_{1}^{j} (\sigma_{0})$ where $j \in \{0, 1, \ldots, n-1\}$, then $f(\sigma_{0})= 1$ and $f(\sigma_{1}^{j} (\sigma_{0}))= 0$ for $j > 0$.
\item[\textit{(ii)}] If $t = \sigma_{1}^{j} (\sigma(\sigma_{1}^{i_1} (\sigma_{0}), \ldots,\sigma_{1}^{i_k} (\sigma_{0})))$ where $k \in \{0, 1, ..., m\}$, $\sigma \in \Sigma_k \setminus \{\sigma_0, \sigma_1\}$, and $j, i_1, \ldots, i_k \in \{0, 1, \ldots, n-1\}$, then $f(t) = B(\sigma)_{(1 + i_1, \ldots, 1 + i_k), (1+n-j) \bmod n}$.
\item[\textit{(iii)}] If $\sum_{\sigma \in \Sigma\setminus \{\sigma_0, \sigma_1\}} \#_{\sigma}(t) \ge 2$, then $f (t) = 0$.
\end{itemize}
\end{lemma}
\begin{proof}
Let $A =(2n,\Sigma, \mu, \gamma) \in  \mathcal{A}$ be such that $\| A \| \equiv f$. 
First, we prove property \textit{(i)}. Using Equation (\ref{Kronecker_mixed_kfold}) and the mixed-product property of Kronecker product, we get that
\begin{eqnarray}\label{zerosym}
\mu(\sigma_{1}^{j} (\sigma_{0})) &= \mu(\sigma_0) \cdot \mu(\sigma_1)^{j} = ( \begin{bmatrix}
 1 ~&  0
 \end{bmatrix} \otimes e_{1} ) \cdot (I_{2} \otimes P^{j}) = \begin{bmatrix}
 1 ~&  0
 \end{bmatrix} \otimes e_{1}  P^{j}
\end{eqnarray}
and therefore
\begin{align}
f(\sigma_{1}^{j} (\sigma_{0})) &=  \mu(\sigma_{1}^{j} (\sigma_{0})) \cdot \gamma = ( \begin{bmatrix}
 1 ~&  0
 \end{bmatrix} \otimes e_{1}  P^{j}) \cdot ( \begin{bmatrix}
 1 ~&  0
 \end{bmatrix}^{\top}  \otimes e_{1}^{\top}  )\notag\\ &=  (\begin{bmatrix}
 1 ~&  0
 \end{bmatrix} \cdot  \begin{bmatrix}
 1 ~&  0
 \end{bmatrix}^{\top}) \otimes (e_{1}  P^{j} \cdot e_{1}^{\top}) = e_{j+1} \cdot e_{1}^{\top}. \label{lower:property1}
\end{align}
If $j=0$ then the expression of (\ref{lower:property1}) is equal to $1$, otherwise the expression of (\ref{lower:property1}) is equal to $0$. This completes the proof of property \textit{(i)}.

Next, we prove property \textit{(ii)}. By the mixed-product property of Kronecker product and Equations (\ref{Kronecker_mixed_kfold}), (\ref{Kronecker:kmixed}), and (\ref{zerosym}),  we have
\allowdisplaybreaks\begin{align}
&\mu(\sigma_{1}^{j} (\sigma(\sigma_{1}^{i_1} (\sigma_{0}), \ldots,\sigma_{1}^{i_k} (\sigma_{0})))) \notag\\ 
&=  \left( \bigotimes\limits_{l=1}^{k} \mu(\sigma_{1}^{i_l} (\sigma_{0})) \right) \cdot \mu(\sigma) \cdot \mu(\sigma_1)^{j} \notag\\
&= \left(  \begin{bmatrix}
 1 
 \end{bmatrix} \otimes \bigotimes\limits_{l=1}^{k} \mu(\sigma_{1}^{i_l} (\sigma_{0})) \right) \cdot \left(  \begin{bmatrix}
 1 ~&  1
 \end{bmatrix} \otimes \left(  \begin{bmatrix}
 I_{n} \\
-I_{n}
 \end{bmatrix}^{\otimes k} \cdot B(\sigma)\right)\right) \cdot (I_{2} \otimes P)^{j}  \notag\\
&=  \left( \left(  \begin{bmatrix}
 1 
 \end{bmatrix} \cdot \begin{bmatrix}
 1 ~&  1
 \end{bmatrix}\right) \otimes\left( \bigotimes\limits_{l=1}^{k} \mu(\sigma_{1}^{i_l} (\sigma_{0}))  \cdot   \begin{bmatrix}
 I_{n} \\
-I_{n}
 \end{bmatrix}^{\otimes k} \cdot B(\sigma)\right)\right) \cdot (I_{2} \otimes P^{j}) \notag\\
&=  \left( \begin{bmatrix}
 1 ~&  1
 \end{bmatrix} \otimes\left( \bigotimes\limits_{l=1}^{k} \left( \left(\begin{bmatrix}
 1 ~&  0
 \end{bmatrix} \otimes e_{1}  P^{i_l}\right)  \cdot   \begin{bmatrix}
 I_{n} \\
-I_{n}
 \end{bmatrix}\right) \cdot B(\sigma)\right)\right) \cdot (I_{2} \otimes P^{j}) \notag \\
&=  \left( \begin{bmatrix}
 1 ~&  1
 \end{bmatrix} \otimes\left( \bigotimes\limits_{l=1}^{k} e_{1}  P^{i_l} \cdot B(\sigma)\right)\right) \cdot (I_{2} \otimes P^{j}) \notag\\
&= \left(\begin{bmatrix}
 1 ~&  1
 \end{bmatrix} \cdot I_{2}\right) \otimes  \left( \bigotimes\limits_{l=1}^{k} e_{1 + i_{l}} \cdot B(\sigma) \cdot  P^{j}\right) \notag\\
&= \begin{bmatrix}
 1 ~&  1
 \end{bmatrix} \otimes  (  B(\sigma)_{(1 + i_1, \ldots, 1 + i_k)} \cdot  P^{j}) \label{lower:propertiesofclass:part2}
\end{align}
and therefore, using the fact that $P^{n} = I_{n}$, we get that
\allowdisplaybreaks\begin{align*}
f(\sigma_{1}^{j} (\sigma(\sigma_{1}^{i_1} (\sigma_{0}), \ldots,\sigma_{1}^{i_k} (\sigma_{0})))) 
&= \mu(\sigma_{1}^{j} (\sigma(\sigma_{1}^{i_1} (\sigma_{0}), \ldots,\sigma_{1}^{i_k} (\sigma_{0})))) \cdot \gamma\\
 &= (\begin{bmatrix}
 1 ~&  1
 \end{bmatrix} \otimes  (  B(\sigma)_{(1 + i_1, \ldots, 1 + i_k)} \cdot  P^{j})) \cdot (\begin{bmatrix}
 1 ~&  0
 \end{bmatrix}^{\top} \otimes e_{1}^{\top})\\
&= (\begin{bmatrix}
 1 ~&  1
 \end{bmatrix} \cdot \begin{bmatrix}
 1 ~&  0
 \end{bmatrix}^{\top} ) \otimes  (  B(\sigma)_{(1 + i_1, \ldots, 1 + i_k)} \cdot  P^{j} \cdot  e_{1}^{\top})\\
&=  B(\sigma)_{(1 + i_1, \ldots, 1 + i_k)} \cdot ( e_{1}  P^{n-j})^{\top}\\
&=    B(\sigma)_{(1 + i_1, \ldots, 1 + i_k), (1+n-j) \bmod n}.
\end{align*}

Finally, we prove property \textit{(iii)}. If $\sum_{\sigma \in \Sigma\setminus \{\sigma_0, \sigma_1\}} \#_{\sigma}(t) \ge 2$ then there exists a subtree $\sigma^{\prime} (t_1, \ldots, t_k)$ of $t$ such that $k \ge 1$,  $\sigma^{\prime} \in \Sigma_k \setminus \{\sigma_1\}$, and $\sum_{\sigma \in \Sigma\setminus \{\sigma_0, \sigma_1\}} \#_{\sigma}(t_i) = 1$ for some $i \in [k]$. It follows from Equation (\ref{lower:propertiesofclass:part2}) that $\mu(t_i) = \begin{bmatrix}
 1 ~&  1
 \end{bmatrix} \otimes \alpha$ holds for some $\alpha \in \mathbb{F}^{1 \times n}$.
By the mixed-product property of Kronecker product and Equation (\ref{Kronecker:kmixed}), we have
\allowdisplaybreaks\begin{align*} 
\mu(\sigma^{\prime} (t_1, \ldots, t_k)) &=\left(  \bigotimes\limits_{j=1}^{k}  \mu(t_j) \right) \cdot \left(  \begin{bmatrix}
 1 ~&  1
 \end{bmatrix} \otimes \left(  \begin{bmatrix}
 I_{n} \\
-I_{n}
 \end{bmatrix}^{\otimes k} \cdot B(\sigma^{\prime})\right)\right)\\
 &=\begin{bmatrix}
 1 ~&  1
 \end{bmatrix} \otimes  \left(  \bigotimes\limits_{j=1}^{k}  \mu(t_j) \cdot   \begin{bmatrix}
 I_{n} \\
-I_{n}
 \end{bmatrix}^{\otimes k} \cdot B(\sigma^{\prime})\right)\\
 &=\begin{bmatrix}
 1 ~&  1
 \end{bmatrix} \otimes  \left( \bigotimes\limits_{j=1}^{k} \left( \mu(t_j)  \cdot   \begin{bmatrix}
 I_{n} \\
-I_{n}
 \end{bmatrix}\right) \cdot B(\sigma^{\prime})\right) = 0_{1 \times 2n}
\end{align*}
where the last equality holds since 
\allowdisplaybreaks\begin{align*} 
\mu(t_i)  \cdot   \begin{bmatrix}
 I_{n} \\
-I_{n}
 \end{bmatrix} =  \begin{bmatrix}
 \alpha ~&  \alpha
 \end{bmatrix}  \cdot   \begin{bmatrix}
 I_{n} \\
-I_{n}
 \end{bmatrix}  = 0_{1 \times n}.
\end{align*}
Since $\sigma^{\prime} (t_1, \ldots, t_k)$ is a subtree of $t$, we now have that $\mu(t) = 0_{1 \times 2n}$ and thus $f(t) = 0$. 
\end{proof}

\begin{remark} \label{lower:remark_modn}
As $P^{n} = I_{n}$, we have $\mu(\sigma_1)^{n} = I_{2 n}$. Thus for every $f \in \mathcal{C}$, $k \in \{0, 1, ..., m\}$, $\sigma \in \Sigma_k \setminus \{\sigma_0, \sigma_1\}$, and $j, i_1, \ldots, i_k \in \mathbb{N}_0$, it holds that $f(\sigma_{1}^{j} (\sigma_{0})) = f(\sigma_{1}^{j \bmod n} (\sigma_{0}))$ and
\begin{align*}
f (\sigma_{1}^{j} (\sigma(\sigma_{1}^{i_1} (\sigma_{0}), \ldots,\sigma_{1}^{i_k} (\sigma_{0})))) = f (\sigma_{1}^{j \bmod n} (\sigma(\sigma_{1}^{i_1 \bmod n} (\sigma_{0}), \ldots,\sigma_{1}^{i_k \bmod n} (\sigma_{0})))).
\end{align*}
\end{remark}

Run $\mathsf{L}$ on a target $f \in \mathcal{C}$. Lemma \ref{lower:propertiesofclass} (i), (iii) and Remark \ref{lower:remark_modn} imply that when $\mathsf{L}$ makes a membership query on $t \in T_{\Sigma}$ such that $\sum_{\sigma \in \Sigma\setminus \{\sigma_0, \sigma_1\}} \#_{\sigma}(t) \ge 2$, the Teacher returns $0$, while when $\mathsf{L}$ makes a membership query on $t = \sigma_{1}^{j} (\sigma_{0})$, the Teacher returns $1$ if $j \bmod n = 0$ and returns $0$ otherwise. In these cases, $\mathsf{L}$ does not gain any new information about $f$ since every function in $\mathcal{C}$ is consistent with the values returned by the Teacher. 

When $\mathsf{L}$ makes a membership query on a tree
$t = \sigma_{1}^{j} (\sigma(\sigma_{1}^{i_1} (\sigma_{0}), \ldots,\sigma_{1}^{i_k} (\sigma_{0})))$ 
such that $k \in \{0, 1, ..., m\}$ and $\sigma \in \Sigma_k \setminus \{\sigma_0, \sigma_1\}$, the Teacher returns 
an arbitrary number in $\mathbb{F}$ if the value $f(t)$ is not already known from an earlier query. Lemma \ref{lower:propertiesofclass} (ii) and Remark \ref{lower:remark_modn} imply that $\mathsf{L}$ thereby learns the entry $B(\sigma)_{(1 + (i_1 \bmod n), \ldots, 1 + (i_k \bmod n)), (1+n-j) \bmod n}$.

When $\mathsf{L}$ makes an equivalence query on a hypothesis $h \in \mathcal{C}$, the Teacher finds some entry $B(\sigma)_{(i_1, \ldots, i_k), j}$ that $\mathsf{L}$ does not know from previous queries and returns the tree $\sigma_{1}^{1+n-j} (\sigma(\sigma_{1}^{i_1 -1} (\sigma_{0}), \ldots,\sigma_{1}^{i_k -1} (\sigma_{0})))$ as the counterexample.
 
With each query, the Learner $\mathsf{L}$ learns at most one entry of $B(\sigma)$ where $\sigma \in \Sigma\setminus \{\sigma_0, \sigma_1\}$. The number of queries made by  $\mathsf{L}$ on target $f$ is, therefore, at least the total number of entries of $B(\sigma)$ for all $\sigma \in \Sigma\setminus \{\sigma_0, \sigma_1\}$. The latter number is equal to
\begin{align*}
\sum_{\sigma \in \Sigma\setminus \{\sigma_0, \sigma_1\}} n^{\mathit{rk}(\sigma) + 1} &\ge \frac{1}{2^{m +1}} \cdot \sum_{\sigma \in \Sigma\setminus \{\sigma_0, \sigma_1\}} r^{\mathit{rk}(\sigma) + 1} \\ &= \frac{1}{2^{m +1}} \cdot \left(\sum_{\sigma \in \Sigma} r^{\mathit{rk}(\sigma) + 1} - r^{2} - r \right). 
\end{align*}
\end{proof}

The lower bounds of Theorems \ref{anyfield} and \ref{specificfield} are both linear in the target automaton size. Note that when the alphabet rank is fixed, the lower bound for learning over a fixed field (Theorem \ref{specificfield}) is the same up to a constant factor as for learning over an arbitrary field (Theorem \ref{anyfield}). 

Assuming a Teacher that represents counterexamples as succinctly as
possible, e.g, using the algorithm of \citet*{seidlfull}, the upper
bound of algorithm $\mathsf{LMTA}$ from Theorem
\ref{MTAlearning_complexityanalysis} is quadratic in the target
automaton size, i.e., quadratically greater than the lower bound of
Theorem \ref{anyfield}.

\section{Future Work}
\citet{five} apply their exact learning algorithm for multiplicity
word automata to show exact learnability of certain classes of
polynomials over both finite and infinite fields.  They also prove
the learnability of disjoint DNF formulae (i.e., DNF formulae in
which each assignment satisfies at most one term) and, more generally,
disjoint unions of geometric boxes over finite domains. 

The learning framework considered in this paper involves tree
automata, which are more expressive than word automata.  Moreover, our
result on the complexity of equivalence of multiplicity tree automata
shows that, through equivalence queries, the Learner essentially has
an oracle for polynomial identity testing.  Thus a natural direction
for future work is to seek to apply our algorithm to derive new
results on exact learning of other concept classes, such as
propositional formulae and polynomials (both in the commutative
and noncommutative cases). In this direction, we would like to examine
the relationship of our work with that of \citet*{klivans2006learning}
on exact learning of algebraic branching programs and arithmetic
circuits and formulae.  The latter paper relies on rank bounds for
Hankel matrices of polynomials in noncommuting variables, obtained by
considering a generalised notion of partial derivative.  Here we would
like to determine whether the extra expressiveness of Hankel matrices
over tree series can be used to show learnability of more expressive
classes of formulae and circuits.

\citet{sakakibara1990learning} showed that context-free grammars
(CFGs) can be learned efficiently in the exact learning model using
membership queries and counterexamples based on parse trees. Given the
important role of weighted and probabilistic CFGs across a range of
applications including linguistics, another natural next step would be to
apply our algorithm to learn weighted CFGs similarly using queries and
counterexamples involving parse trees.



\acks{The authors would like to thank Michael Benedikt for stimulating discussions and helpful advice. The first author gratefully acknowledges the support of the EPSRC.}

\bibliography{references}

\end{document}